
\documentclass[10pt]{article} 

\usepackage[accepted]{rlj}           

%
%

\usepackage{amssymb}            
\usepackage{mathtools}          
\usepackage{mathrsfs}           
\usepackage{graphicx}           
\usepackage{subcaption}         
\usepackage[space]{grffile}     
\usepackage{url}                
\usepackage{lipsum}             

\definecolor{brightmaroon}{rgb}{0.76, 0.13, 0.28}
\usepackage[capitalize,noabbrev]{cleveref}

\usepackage{titlesec}
\usepackage{algorithm}
\usepackage{algorithmic}
\usepackage{amsmath}
\usepackage{amssymb}
\usepackage{mathtools}
\usepackage{amsthm}
\usepackage{booktabs}

\newtheorem{theorem}{Theorem}[section]

\theoremstyle{definition}

\newtheorem{Proposition}{Proposition}
\newtheorem{Assumption}{Assumption}

\newenvironment{custom}[1]
  {\innercustomthm}
  {\endinnercustomthm}
\crefname{Assumption}{Assumption}{Assumptions}
\theoremstyle{remark}

\DeclareMathOperator{\E}{\mathbb{E}}

\newcommand{\abs}[1]{\left\lvert#1\right\rvert}
\newcommand{\norm}[1]{\left\lVert#1\right\rVert}
\newcommand{\ep}{\epsilon}

\newcommand{\cY}{\mathcal{Y}}

\newcommand{\cH}{\mathcal{H}}

\DeclareMathOperator*{\argmax}{argmax}


\title{An  Optimisation Framework for Unsupervised Environment Design}

\setrunningtitle{An Optimisation Framework for Unsupervised Environment Design}


\author{Nathan Monette\textsuperscript{1, $\dagger$, $*$}, 
Alistair Letcher\textsuperscript{2}, 
Michael Beukman\textsuperscript{2}, 
Matthew T. Jackson\textsuperscript{2}, Alexander Rutherford\textsuperscript{2}, Alexander D. Goldie\textsuperscript{2}, Jakob N. Foerster\textsuperscript{2}}


\emails{$^\dagger$nmonette@uci.edu}

\affiliations{
$^{1}$\textbf{University of California Irvine}\\
$^{2}$\textbf{FLAIR, University of Oxford}
\par 
$^*$Work undertaken while visiting FLAIR.
}

\contribution{We provide a reformulation of UED that is strongly concave in the adversary's strategy, allowing for easier convergence.}{\citet{dennis2021emergentcomplexityzeroshottransfer}'s initial UED work PAIRED uses a nonconvex-nonconcave objective, which is known to be unstable in training \citep{wiatrak2020stabilizinggenerativeadversarialnetworks}. Moreover, follow-up works such as \citep{chung2024adversarialenvironmentdesignregretguided} that improve PAIRED's level generator with generative models maintain this property.} 
\contribution{We provide convergence guarantees for any score function that is a zero-sum game over the policy's negative return (e.g. regret or negative return).}{Prior works in UED \citep{dennis2021emergentcomplexityzeroshottransfer,jiang2022replayguidedadversarialenvironmentdesign} assert guarantees if the UED game reaches a saddle point, but fail to guarantee convergence to one. We propose a method that provably converges.}
\contribution{We provide an empirical evaluation of our methods on current UED benchmarks, using relevant optimisation heuristics and by introducing a new score function that generalises the work of \citet{rutherford2024regretsinvestigatingimprovingregret} to general deterministic RL environments.}{Learnability \citep{rutherford2024regretsinvestigatingimprovingregret} requires a binary-outcome environment.}

\keywords{Optimisation, Environment Design, Reinforcement Learning, Robustness} 

\summary{For reinforcement learning agents to be deployed in high-risk settings, they must achieve a high level of robustness to unfamiliar scenarios. One approach for improving robustness is unsupervised environment design (UED), a suite of methods that aim to maximise an agent's generalisability by training it on a wide variety of environment configurations. In this work, we study UED from an optimisation perspective, providing stronger theoretical guarantees for practical settings than prior work. Whereas previous methods relied on guarantees \textit{if} they reach convergence, our framework employs a nonconvex-strongly-concave objective for which we provide a \textit{provably convergent} algorithm in the zero-sum setting. We empirically verify the efficacy of our method, outperforming prior methods on two of three environments with varying difficulties.}

\begin{document}

\makeCover  
\maketitle  

\begin{abstract}
For reinforcement learning agents to be deployed in high-risk settings, they must achieve a high level of robustness to unfamiliar scenarios. One approach for improving robustness is unsupervised environment design (UED), a suite of methods that aim to maximise an agent's generalisability by training it on a wide variety of environment configurations. In this work, we study UED from an optimisation perspective, providing stronger theoretical guarantees for practical settings than prior work. Whereas previous methods relied on guarantees \textit{if} they reach convergence, our framework employs a nonconvex-strongly-concave objective for which we provide a \textit{provably convergent} algorithm in the zero-sum setting. We empirically verify the efficacy of our method, outperforming prior methods on two of three environments with varying difficulties.\footnote{Our code is available at \url{https://github.com/nmonette/NCC-UED}.}
\end{abstract}

\section{Introduction}
Training reinforcement learning (RL) agents that are robust to a variety of unseen scenarios is an important and long-standing challenge in the field~\citep{morimoto2000robust,tobin2017domainrandomizationtransferringdeep}.One promising approach to addressing this problem is Unsupervised Environment Design (UED), where an adversary automatically proposes a diverse range of training tasks for an agent to learn on, based on its current capabilities~\citep{dennis2021emergentcomplexityzeroshottransfer,jiang2022replayguidedadversarialenvironmentdesign}. In this way, the agent gradually progresses from easy tasks (also called levels) to more difficult ones.

UED is generally posed as a two-player game between a level-selecting adversary and a level-solving agent. Most current methods use a variation of the \textit{minimax regret} approach, in which levels with high regret---meaning an agent performs far from optimal---are selected by an adversary for the agent to learn in~\citep{dennis2021emergentcomplexityzeroshottransfer, jiang2021prioritizedlevelreplay, jiang2022replayguidedadversarialenvironmentdesign, jiang2022groundingaleatoricuncertaintyunsupervised, parkerholder2023evolvingcurricularegretbasedenvironment}. Using regret as a  \textit{score} function is intuitive, since it shows how suboptimal a policy is on a specific level, i.e., how much it can still improve. However, despite some empirical success \citep{dennis2021emergentcomplexityzeroshottransfer,jiang2021prioritizedlevelreplay}, more recent works have presented a number of issues with the minimax regret formulation \citep{jiang2022groundingaleatoricuncertaintyunsupervised, beukman2024refiningminimaxregretunsupervised, rutherford2024regretsinvestigatingimprovingregret}, motivating the creation of new ways of determining the \textit{score} of a level. The canonical formulation of UED faces several other challenges, including lack of convergence guarantees and instability when searching for useful levels in a large space. 

In our work, we provide a reformulation of minimax regret as the expected regret when sampling levels from a categorical distribution. Using this reformulation, we provide a gradient-based algorithm that is \textit{provably convergent} to a locally optimal policy, relying on recent results \citep{lin2024twotimescalegradientdescentascent} in two-timescale gradient descent that ensure convergence when learning rates for minimiser (agent) and maximiser (adversary) are \textit{separated}. This stands in contrast to the existing paradigm that only has theoretical guarantees \textit{if} convergence is achieved.

Using regret as the score function is theoretically attractive because it maintains the zero-sum property between the agent and the adversary, and therefore allows for stronger guarantees. 
However, high-regret levels may not lead to efficient learning; moreover, computing regret is generally intractable, as it requires an optimal policy for each level. Inspired by this, \citet{rutherford2024regretsinvestigatingimprovingregret} use    \textit{learnability}~\citep{tzannetos2023proximalcurriculumreinforcementlearning} as an effective scoring function, but their formulation is limited to deterministic, binary-outcome domains. 
In our work, we generalise this score function to arbitrary deterministic settings, using the interpretation of learnability as the variance of agent success.
We finally develop a practical UED algorithm, NCC (see Figure \ref{fig:alg}), using our reformulation and generalised score function which obtains competitive results in three challenging domains. In this work, we provide a stepping stone towards more robust RL algorithms by creating a theoretically sound optimisation framework that offers competitive empirical results.

\begin{figure}[!t]
    \centering
    \includegraphics[width=\linewidth]{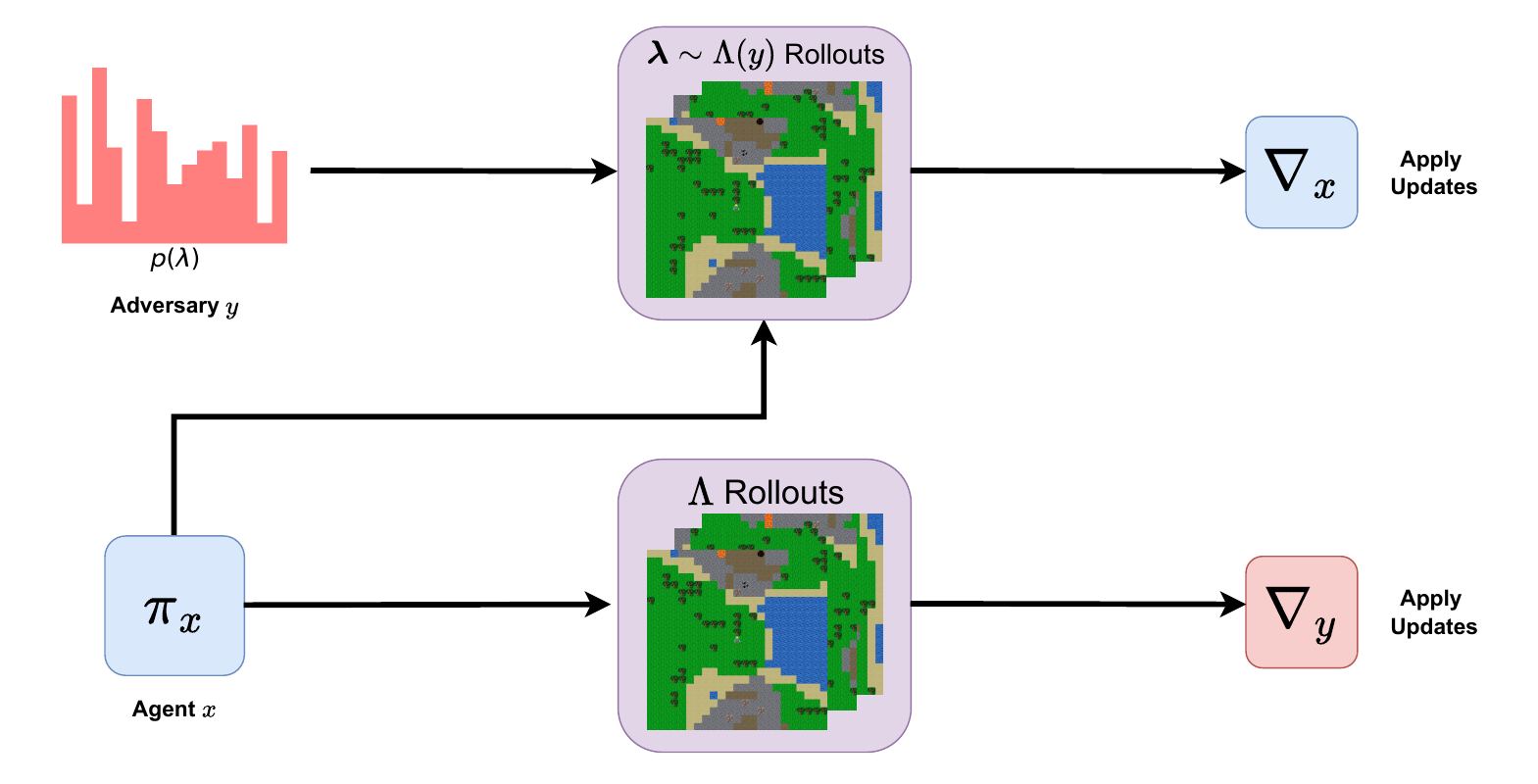}
    \caption{A visual representation of our training loop, which has simultaneous updates for the agent $x$ and the adversary $y$. The agent's update is trained on levels $\boldsymbol{\lambda}$ sampled from $y$, and the adversary's update is computed with scores computed from the policy on all levels from level-buffer $\Lambda$. We use the \textit{Craftax} environment from \citet{matthews2024craftaxlightningfastbenchmarkopenended} for illustration.\vspace{-0.5cm}}\label{fig:alg}
\end{figure}

\section{Background}
\subsection{Underspecified POMDP's} The underlying theoretical framework behind UED  is the Underspecified Partially Observable Markov Decision Process (UPOMDP). The UPOMDP \citep{dennis2021emergentcomplexityzeroshottransfer} describes an environment with level space $\mathcal{L}$, such that we train over some subset $\Lambda \subseteq \mathcal{L}$, where each parametrisation $\lambda \in \Lambda$ represents a POMDP. A UPOMDP is defined by the tuple $(\mathcal{L}, \mathcal{S}, \mathcal{O}, \mathcal{A}, r, P,  \rho, \gamma)$, where $\mathcal{S}$ is the state space, $\mathcal{O}$ is the observation space, and each $o \in \mathcal{O}$ is typically a limited view of the global state of the environment. The action space is $\mathcal{A}$, and the reward function is defined as $r: \mathcal{L} \times \mathcal{S} \times \mathcal{A} \to \mathbb{R}$. The transition probability function $P$ has a varying definition depending on the environment dynamics, but for the discrete-state case we write $P: \mathcal{L} \times \mathcal{S} \times \mathcal{A} \to \Delta(\mathcal{S})$, where $\Delta(\mathcal{S})$ is the probability simplex of size equal to the cardinality of set $\mathcal{S}$. Finally, $\gamma$ denotes the discount factor and $\rho: \mathcal{L} \to \Delta(\mathcal{S})$ denotes the initial state distribution function.

Consider the agent's parameter space $\mathcal{X}$, and policy $\pi: \mathcal{X} \times \mathcal{O} \to \Delta(\mathcal{A})$, where the policy may condition on a hidden state $h$ in the partially observable setting. The expected discounted return of the agent on a given level is $J: \mathcal{L} \times \mathcal{X} \to \mathbb{R}$. Moreover, we define the general objective of our agent to be the following, where $\Lambda(y)$ is some distribution over levels parametrised by $y$: 
\begin{equation}
    \max_{x \in \mathcal{X}} ~\mathbb{E}_{\lambda \sim \Lambda(y)} \bigg[J(\pi_x, \lambda)\bigg] \text{.}
\end{equation}
\subsection{Learning in Games}
A game is a scenario where there are agents interacting with each other by taking \textit{actions}, typically under the assumption that each agent is trying to maximise their own utility.
\paragraph{Solutions of Games}\textit{Learning} a game generally involves optimising for an equilibrium point between the players. Typically, the hope is that the players will achieve a \textit{Nash Equilibrium}~\citep[NE]{Nash1951}, which requires that neither player can unilaterally deviate their strategy to obtain a better utility. In such equilibria, players are \textit{robust} to changes to the opponent's strategy. Hence, the robustness guarantees of prior UED works \citep{dennis2021emergentcomplexityzeroshottransfer, jiang2022replayguidedadversarialenvironmentdesign} are derived under an assumption that their systems have converged to a NE. 
\paragraph{First-Order Nash Equilibria} Following \citet{nouiehed2019nash}, we consider the solution concept of the ($\epsilon$-approximate) first-order Nash Equilibrium. For $\epsilon \ge 0$, unconstrained $x$, and $y$ constrained to $\mathcal{Y}$, a first-order NE $(x^*,y^*)$ of the objective $\min_x \max_{y \in \mathcal{Y}} f(x, y)$,  is defined by
\begin{equation}\label{eq:fo_nash}
    \begin{aligned}
        &\|\nabla_x f(x^*, y^*)\| \le \epsilon \\
        \max_{y \in \mathcal{Y}}~&\langle \nabla_y f(x^*, y), y - y^*\rangle \le \epsilon \,
        \text{s.t. } \|y - y^*\| \le 1\,.
    \end{aligned} 
\end{equation}
An interpretation of the first-order NE is more clear when fixes on variable: neither $x$ or $y$ are able to further optimise w.r.t. $f$ via first-order gradient dynamics except for by some (small) distance $\epsilon$. 
\subsection{Game Theory and UED} 
\paragraph{Zero-sum UED} Prior works frame UED as a zero-sum game between an agent (the policy interacting with the environment) and a level-generating adversary \citep{dennis2021emergentcomplexityzeroshottransfer, jiang2022replayguidedadversarialenvironmentdesign}. The adversary aims to maximise the agent's \textit{score} on the levels. A common score function is \textit{regret}, defined as $\text{Reg}(\pi_x, \lambda) = J(\pi_*^\lambda, \lambda) - J(\pi_x,\lambda)$, for a level $\lambda \in \Lambda$ and its optimal policy $\pi_*^\lambda$.
An agent that maximises its expected return on a given level is equivalently minimising its regret, hence the a regret-maximising adversary is zero-sum with a return-maximising agent.

\paragraph{Minimax Regret} PAIRED \citep{dennis2021emergentcomplexityzeroshottransfer}, uses a generator parametrised by $y$ as their adversary. While not explicitly stated, the objective being optimised is
\begin{equation}
    \min_\pi \max_y~\mathbb{E}_{\lambda \sim  \Lambda(y)} \bigg[ \text{Reg}(\pi, \lambda) \bigg].
\end{equation}

In their analysis, \citet{dennis2021emergentcomplexityzeroshottransfer} construct a \textit{normal form} game (i.e., a game represented by a payoff matrix for each player), it is assumed that the action space of the agent is the (finite) set of possible deterministic policies. Practically, however, PAIRED trains a stochastic neural network-based policy via PPO \citep{schulman2017proximalpolicyoptimizationalgorithms}, and is not deterministic during training. In addition, due to the use of nonlinear neural networks for both the policy and the generator, the objective is in fact nonconvex-nonconcave. 
Hence, the normal-form construction (which is convex-concave) is not a reasonable representation of the UED problem. Instead, we argue that UED should be viewed as a min-max optimisation problem over the parameters of the agent and adversary. 

The theoretical results of \citet{dennis2021emergentcomplexityzeroshottransfer} results hold only \textit{at} Nash Equilibrium, but there is no guarantee that this NE will be reached. In fact, the system is unlikely to converge to a NE due to the nonconvex-nonconcave objective, which is not well-understood in the optimisation literature without additional unmet structural assumptions \citep{mertikopoulos2018optimistic,jin2020local, cai2024ncnc}. Secondly, the minimax theorem does not hold on the account of the nonconvexity/nonconcavity of the optimised variables \citep{jin2020local}. We circumvent both issues by constructing a nonconvex-strongly-concave objective for UED and proving that the variables involved converge to a first-order NE without needing to invoke the minimax theorem for analysis.

\subsection{Choice of Score Function}
Beyond issues with the theoretical framework of minimax regret, regret is often not a practically viable choice as a score function. While regret incentivises the adversary to propose levels where the agent has much capacity to improve, these levels may not lead to optimal learning, and in fact may not be conducive to learning at all. Regret also relies on the optimal policy, which is generally not available. There is also the \textit{regret stagnation} problem \citep{beukman2024refiningminimaxregretunsupervised}, where due to some stochasticity or partial observability in an environment, the regret is not reducible below some non-minimal value.\footnote{For example, if return $R(\cdot, \lambda) \in [0, 1]$, and cannot be increased past $0.7$, $\text{Reg}(\cdot, \lambda)$ is irreducible below $0.3$.} When irreducible, regret is no longer representative of policy learning potential.

One prevailing alternative to regret is the \textit{learnability} of a level \citep{rutherford2024regretsinvestigatingimprovingregret}. Assuming a binary-outcome setting, where the return $R(\tau, \lambda)$ is in $\{0, 1\}$ for all trajectories $\tau$ and levels $\lambda$, learnability is defined as $s(\pi_x, \lambda) = p(1 - p)$, where $p$ is the agent's solve rate $p = \E_{\tau \sim \pi_x(\lambda)} \left[ R(\tau, \lambda) \right]$. For binary outcomes, this can be rewritten as the variance of returns as follows:
\begin{align}\label{eq:learn}
    s(\pi_x, \lambda) = \text{Var}_{\tau \sim \pi_x(\lambda)}\bigg[R(\tau, \lambda) \bigg] \,.
\end{align}
Learnability has a number of interpretations that are explored in \citet{tzannetos2023proximalcurriculumreinforcementlearning} and \citet{ rutherford2024regretsinvestigatingimprovingregret}, but the variance interpretation is intuitive in the sense that levels with low variance of returns are either too difficult or too easy, and should not be prioritised during training.  

\section{Related Work}\label{limits}

UED algorithms can generally be categorised into either \textit{sampling}-based and \textit{generative}-based approaches. The former assume access to some function that samples levels; often, this is the uniform distribution over levels $\mathcal{L}$. The latter instead learn a level-generating model (either via RL~\citep{dennis2021emergentcomplexityzeroshottransfer} or self-supervised learning~\citep{garcin2024dred}).

The simplest UED method, called Domain Randomisation (DR), directly trains an agent on this uniform distribution~\citep{tobin2017domainrandomizationtransferringdeep}. 
In simple environments, DR can perform competitively to other UED algorithms~\citep{coward2024jaxuedsimpleuseableued}, but falls behind when the environment becomes more complex~\citep{matthews2024kinetixinvestigatingtraininggeneral}.
Other sampling-based approaches, like Prioritised Level Replay \citep[PLR]{jiang2021prioritizedlevelreplay, jiang2022replayguidedadversarialenvironmentdesign} and Sampling For Learnability \citep[SFL]{rutherford2024regretsinvestigatingimprovingregret} prioritise training on levels with a high \text{score} (e.g., regret, learnability, etc.). 
However, empirically these methods tend to work best when the agent is \textit{also} trained on some levels from the unfiltered DR distribution, in addition to the high-scoring ones.

While these sampling methods have achieved impressive empirical results, they do not enjoy convergence guarantees due to the use of heuristics in place of gradient-based optimisation. 
In our work, we establish a two-player game objective that is efficiently optimisable with gradients. Moreover, we provide convergence guarantees for regret, but defer theoretical considerations of learnability to future work, as the general-sum setting induces a significant departure from our current method.

Generative approaches such as PAIRED have also seen success, particularly when replacing the RL level generator with a deep generative model \citep{azad2023clutr, li2024enhancinghierarchicalenvironmentdesign, garcin2024dred, chung2024adversarialenvironmentdesignregretguided}. 
However, the optimisation objective is nonconvex-nonconcave due to the neural adversary, and thus is known to have issues with instability and convergence \citep{wiatrak2020stabilizinggenerativeadversarialnetworks}. 
Our method does not use a deep generative model, but unifies the generative and sampling approaches by \textit{learning} the sampling distribution with gradient optimisation.
\section{Method}
We reformulate UED as a regularised game over expected score, and derive optimisation guarantees from this formulation. We then break the theoretical assumptions for empirical reasons in Section \ref{impl}.

\subsection{Core Optimisation Problem} Consider $s: \mathcal{X} \times \mathcal{L}  \to \mathbb{R}$, where $s(\pi_x, \lambda)$ denotes the policy's score on level $\lambda$. We abuse notation to rewrite any level-wise function of $\lambda$ as a function of $\Lambda$ in  \textbf{bold} to denote the vector of the function evaluated at all levels $\lambda \in \Lambda$ (e.g. $\boldsymbol{s}(\pi_x, \Lambda)$ is the \textit{score vector}). Additionally, we define $\mathcal{Y} := \Delta(\Lambda)$ as the feasibility set for $y$. Motivated by the unstated formulations of \citet{dennis2021emergentcomplexityzeroshottransfer} and \citet{jiang2022replayguidedadversarialenvironmentdesign}, we establish the expected score objective for UED, similar to \citet{qian2018robustoptimizationmultipledomains}, which is linear in the adversary's strategy: 
\begin{align}\label{eq:reform}
    \min_{x \in \mathcal{X}} \max_{y \in \mathcal{Y}}~\mathbb{E}_{\lambda \sim \Lambda(y)} \bigg[ s(\pi_x, \lambda) \bigg] = \min_{x \in \mathcal{X}} \max_{y \in \mathcal{Y}}~ y^T \boldsymbol{s}(\pi_x, \Lambda) \,.
\end{align}
Extending the \textit{soft UED} framework of \citet{chung2024adversarialenvironmentdesignregretguided}, we add an entropy regularisation term $\mathcal{H}(y) = -y^T \log y$ to the objective of our adversary:
\begin{equation}\label{eq:minmax-entropy}
    \min_{x \in \mathcal{X}} \max_{y \in \mathcal{Y}}~ f(x, y) \coloneqq \min_{x \in \mathcal{X}} \max_{y \in \mathcal{Y}}~ y^T \boldsymbol{s}(\pi_x, \Lambda) + \alpha \mathcal{H}(y) \,,
\end{equation}
where $\alpha > 0$ is a temperature coefficient. Our justification is twofold: first, as per \citet{chung2024adversarialenvironmentdesignregretguided} the agent needs to train on several different levels at each iteration, requiring the adversary's distribution to have greater entropy instead of collapsing to a single level whose score is largest. Second, entropy regularisation ensures that $f$ is strongly concave in $y$, guaranteeing best-iterate convergence (Theorem \ref{th}). For general score functions, the optimisation problem is given by:
\begin{equation}\label{eq:genform}
\begin{aligned}
    \min_{x \in \mathcal{X}}~f(x, y) = -y^T\boldsymbol{J}(\pi_x, \Lambda) \ \ , \ \ \max_{y \in \mathcal{Y}}~g(x, y) = y^T \boldsymbol{s}(\pi_x, \Lambda) + \alpha \cH(y) \,.
\end{aligned}
\end{equation}
Because UED conventionally uses a nonlinear neural network to parameterise its policy (and value function), we have an objective that is nonconvex in $x$ and strongly concave in $y$. \citet{lin2024twotimescalegradientdescentascent} have shown that under certain assumptions, we can guarantee (best-iterate) convergence using \textit{two-timescale} stochastic gradient descent-ascent, which assumes a separation of learning rates. 
Using these results, we propose \textbf{N}on\textbf{C}onvex-\textbf{C}oncave optimisation for UED (NCC) as a theoretically based method for optimisation in the UED setting.

\subsection{Method for Optimisation}
To perform gradient based optimisation for the adversary, we construct the score vector $\mathbf{s}$ before each iteration of RL training to construct $y$'s gradient. For the adversary, we perform projected gradient ascent constrained to the probability simplex, and for $x$ we perform unconstrained gradient descent. The training loop is summarised in Algorithm \ref{alg:theory} (illustrated in Figure \ref{fig:alg}), where $\mathcal{P}_\mathcal{X}(\cdot)$ represents the Euclidean projection onto set $\mathcal{X}$. For learning rates $\eta_y \gg \eta_x$, and stochastic gradient estimators $\hat{F}$ and $\hat{G}$ defined in \cref{eq:G,eq:H}, our update rule can be summarised as the following:
\begin{equation}
    x^{t+1} = x^t - \eta_x \cdot \hat{F},~~y^{t+1} =\mathcal{P}_\mathcal{Y} \big(y^t + \eta_y \cdot \hat{G} \big) \,.
\end{equation} 
In theory, NCC uses a single stochastic gradient step of $x$ using the gradient estimator in Equation \eqref{eq:G}, and relies on a static buffer.  While we make typical assumptions about the policy architecture and policy gradient estimator for the sake of theoretical analysis, we do not necessarily use these in practice due to worsened empirical performance (see Appendix \ref{sec:comparison} for more details). 

\begin{algorithm}[h]
    \caption{Nonconvex-concave Optimisation for UED}

    \begin{algorithmic}\label{alg:theory}
        \REQUIRE{Initial policy $x^{0}$, distribution $y^0 = \frac{1}{|\Lambda|}\mathbf{1}$, stepsizes $\eta_x, \eta_y$}, level set $\Lambda$.
        \FOR{$t = 0, 1, \ldots$} 
            \STATE Sample batch of training levels $\boldsymbol{\lambda} \sim \Lambda(y^{t})$
            \STATE Construct score vector $\mathbf{s} = \boldsymbol{s}(\pi_x, \Lambda)$
            \STATE $x^{t+1} = x^t - \eta_x \cdot \hat{F}(x^t, y^t; \boldsymbol{\lambda})$ with $\hat{F}$ defined in Equation \eqref{eq:G}
            \STATE $y^{t+1} = \mathcal{P}_{\mathcal{Y}} \bigg(y^{t} + \eta_y \cdot\hat{G}(x^{t}, y^{t}; \mathbf{s})\bigg)$ with $\hat{G}$ defined in Equation \eqref{eq:H}
           
        \ENDFOR\\
        \STATE \textbf{return} Best-iterate policy parameters $x^*$
    \end{algorithmic}
\end{algorithm}

\section{Convergence Results}\label{sec:guarantees}
In order to guarantee convergence in the zero-sum setting, we use two-timescale stochastic gradient descent-ascent \citep{lin2024twotimescalegradientdescentascent}, as given by Algorithm \ref{alg:theory}, to find an approximate solution to the optimisation problem defined by Equation \eqref{eq:minmax-entropy}. We first make the necessary assumptions and definitions, and then state our main theorem.

\subsection{Preliminaries and Assumptions}
\paragraph{Notation} We denote $F = \nabla_x f(x, y)$, $G = \nabla_y f(x, y)$, and $H = \nabla f = (F, G)$. Moreover, we let $N = \abs{\Lambda}$ be the number of levels and $\lambda_i$ be the $i$-th level. We denote trajectories by $\tau^t = (o^t, a^t, r^t)$, with $\pi_x(\tau^t) \coloneqq \pi_x(a^t|o^t)$ for short. We write $\Psi^t(\tau, \lambda)$ for the estimator used in policy gradients, typically taken to be the discounted Q-function $Q_{\pi_x}^t(\tau, \lambda) = \gamma^t Q_{\pi_x}(s^t, a^t, \lambda)$, advantage function $A_{\pi_x}^t(\tau, \lambda) = \gamma^t A_{\pi_x}(s^t, a^t, \lambda)$ or return $R^t(\tau, \lambda) = \sum_{h = t}^t \gamma^h r^h$ from time $t$ onwards. The discount ensures that the policy gradient is a true gradient, as clarified by \citet{nota2020policy}.

In order to prove convergence guarantees, we need to make some basic regularity assumptions on the UED and policy network architecture. The first is standard for returns to be finite and for regularity conditions to hold, while the second can be guaranteed by clipping, normalising or otherwise bounding network weights, as discussed in Appendix \ref{app:proofs}.

\begin{Assumption}
\label{as:1}
    The number of levels $N$ and the longest episode length $T$ are finite, and the state and reward spaces are bounded. We consequently write $R_{*} = \max_{\tau, \lambda} \abs{R(\tau, \lambda)} < \infty$ for the largest absolute discounted return across trajectories and levels.
\end{Assumption}

\begin{Assumption}
\label{as:2}
    The agent policy $\pi_x$ is a $\zeta$-greedy policy parameterized by an $L$-Lipschitz and $K$-smooth function with parameters $x$. The adversary distribution is constrained to the $\xi$-truncated probability simplex, namely $\cY = \Delta_\xi(\Lambda) \coloneqq \{ y \in \Delta(\Lambda) \mid y_i \geq \xi \ \forall i\}$.
\end{Assumption}

\paragraph{Gradient Estimators} For the purpose of analysis, we generalize REINFORCE \citep{williams1992reinforce} to the UED setting by defining an unbiased estimator for our agent's gradient $F$ as an expectation over $N$ levels $\lambda_i$ sampled from $\Lambda(y)$, with a batch size of $M$ trajectories for each level:
\begin{equation}\label{eq:G}
    \hat{F}(x, y) = - \frac{1}{NM} \sum_{i, j} \sum_{t=0}^T \nabla_x \log \pi_x(\tau_{ij}^t, \lambda_i) \Psi^t(\tau_{ij}, \lambda_i) \,,
\end{equation}
where $\lambda_i \sim \Lambda(y)$ and trajectories $\tau_{ij} \sim \pi_x(\lambda_i)$ are sampled independently. For the adversary, the unbiased estimator gradient is similarly given by
\begin{equation}\label{eq:H}
    \hat{G}(x, y) = \hat{\boldsymbol{s}}(\pi_x, \Lambda) + \alpha \nabla_y \mathcal{H}(y) \,,
\end{equation}
where $\hat{s}$ is the empirical score vector, given by $\hat{s}(\pi_x, \lambda_i) = -\frac{1}{M} \sum_j R(\tau_{ij}, \lambda_i)$ for $s = -J$ and $\hat{s}(\pi_x, \lambda_i) = \max_{\tau} R(\tau, \lambda_i) -\frac{1}{M} \sum_j R(\tau_{ij}, \lambda_i)$ for $s = \text{Reg}$.

\subsection{Convergence Guarantees}

\begin{Proposition}\label{prop:1}
    Under \cref{as:1,,as:2}, the estimator $\hat{H} = (\hat{F}, \hat{G})$ defined in \cref{eq:G,eq:H} has $\sigma^2_M$-bounded variance, where
    $$ \sigma^2_M = \frac{4 R_*^2}{M} \left( N + \frac{T^2 L^2}{\zeta^2} \right) \,. $$
    Moreover, the corresponding objective $f(x,y) = y^T \boldsymbol{s}(\pi_x, \Lambda) + \alpha \cH(y)$ defined in \cref{eq:minmax-entropy} is $\alpha$-strongly concave in $y$, Lipschitz, and $\ell$-smooth, where
    $$ \ell = \frac{TR_*}{\zeta} \left( TL^2 + K + \frac{L^2}{\zeta} + 2TL \right) + \frac{\alpha}{\xi} \,. $$
\end{Proposition}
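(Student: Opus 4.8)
The plan is to prove the three assertions separately, exploiting the key observation that a single uniform \emph{second-moment} bound on the stochastic gradient controls both its variance and the Lipschitz constant of $f$. Throughout I use that $\hat H=(\hat F,\hat G)$ is a stacked vector, so that $\norm{\hat H-H}^2=\norm{\hat F-F}^2+\norm{\hat G-G}^2$ and it suffices to treat the two blocks independently, bounding each variance by its second moment via $\Var(\cdot)\le\E\norm{\cdot}^2$.

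For the variance, I would handle $\hat G$ first. The entropy gradient $\alpha\nabla_y\cH(y)$ is deterministic, so it cancels in $\hat G-G$ and only the empirical score deviation survives. Under \cref{as:1} we have $\abs{R(\tau,\lambda)}\le R_*$, so each of the $N$ coordinates of $\hat s$ lies in an interval of half-width at most $2R_*$ (taking regret, $\hat s_i=\max_\tau R-R(\tau_i,\lambda_i)\in[0,2R_*]$); hence each coordinate has second moment at most $4R_*^2$ and $\Var(\hat G)\le 4R_*^2N$. For $\hat F$, the log-derivative trick together with the $\zeta$-greedy, $L$-Lipschitz policy of \cref{as:2} gives $\norm{\nabla_x\log\pi_x}\le L/\zeta$, while the baseline obeys $\abs{\Psi^t}\le 2R_*$ (e.g.\ $A=Q-V$ with each term bounded by $R_*$); summing over the horizon $T$ yields the pointwise bound $\norm{\hat F}\le 2R_*TL/\zeta$, so $\Var(\hat F)\le\E\norm{\hat F}^2\le 4R_*^2T^2L^2/\zeta^2$. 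Adding the two blocks produces $\sigma^2=4R_*^2(N+T^2L^2/\zeta^2)$. The \emph{identical} pointwise bounds then deliver the Lipschitz constant: since $\hat F$ and $\hat s$ are unbiased, Jensen's inequality gives $\norm{F}\le\sqrt{\E\norm{\hat F}^2}$ and $\norm{\boldsymbol s(\pi_x,\Lambda)}\le 2R_*\sqrt N$, so the score part of $\nabla f$ has norm at most $\sigma$ (the deterministic entropy gradient is bounded separately on the truncated simplex). Strong concavity is immediate: the score term $y^T\boldsymbol s(\pi_x,\Lambda)$ is linear in $y$, so $\nabla^2_{yy}f=\alpha\nabla^2_y\cH(y)=-\alpha\,\mathrm{diag}(1/y_i)$, and because $y_i\le 1$ for any probability vector we get $\nabla^2_{yy}f\preceq-\alpha I$.

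For smoothness I would bound the operator norm of the full Hessian blockwise, since $\ell$-smoothness is equivalent to $\ell$-Lipschitzness of $\nabla f$. The $yy$-block is $-\alpha\,\mathrm{diag}(1/y_i)$, and on the $\xi$-truncated simplex $y_i\ge\xi$, giving $\norm{\nabla^2_{yy}f}\le\alpha/\xi$ — exactly the final term of $\ell$. The cross block $\nabla^2_{xy}f$ simply collects the per-level policy gradients $\nabla_x s_i$, each of the same order as $\norm{F}$. The substantive term $\frac{TR_*}{\zeta}(TL^2+K+L^2/\zeta+2TL)$ comes from the $xx$-block, i.e.\ the smoothness of the expected score in $x$: differentiating the finite-horizon policy-gradient expression a second time produces score-function-squared terms, controlled by $\norm{\nabla_x\log\pi_x}^2\le L^2/\zeta^2$ accumulated over $T$ steps, and curvature terms controlled by $\norm{\nabla_x^2\log\pi_x}\le K/\zeta+L^2/\zeta^2$ (using the $K$-smoothness and $L$-Lipschitzness of the parameterization together with the $\zeta$ lower bound), each weighted by $R_*$ and the horizon. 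Combining the three block norms yields $\ell$.

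The main obstacle is the $xx$-block, namely the second derivative of the finite-horizon policy gradient. One must carefully separate the two sources of $x$-dependence — the trajectory distribution $\pi_x(\tau)$ and the integrand $\sum_t\nabla_x\log\pi_x$ — and track how the products of score functions across $T$ timesteps and the per-step curvature terms assemble into the stated prefactor; the entropy and coupling blocks, by contrast, are comparatively immediate once the truncation and per-level gradient bounds are in hand.
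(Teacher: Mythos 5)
Your proposal is correct and follows essentially the same route as the paper's proof: the identical variance decomposition (deterministic entropy gradient cancels, $4R_*^2$ per score coordinate giving $4NR_*^2$, and the $\norm{\nabla_x\log\pi_x}\le L/\zeta$, $\abs{\Psi^t}\le 2R_*$ bounds giving $4T^2R_*^2L^2/\zeta^2$), the same diagonal-Hessian argument for $\alpha$-strong concavity, the same Jensen step for Lipschitzness, and the same blockwise Hessian bound for smoothness with the $xx$-block handled by differentiating the policy gradient into score-product and log-policy-curvature terms (the paper does this via the policy-Hessian formula of Shen et al.). One bookkeeping remark: your (correct) bound on the score-product term uses $\norm{\nabla_x\log p(\tau\mid\pi_x)}\le TL/\zeta$ and hence yields $T^2R_*L^2/\zeta^2$, whereas the paper bounds $\norm{\nabla_x p(\tau\mid\pi_x)}\le TL$ and then treats it as a bound on $\norm{\nabla_x\log p(\tau\mid\pi_x)}$, producing the smaller $T^2R_*L^2/\zeta$ term appearing in the stated $\ell$ --- so a rigorous version of your argument would end with a slightly larger (and arguably the right) smoothness constant rather than literally the one in the proposition.
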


\begin{proof}
    In Appendix \ref{app:proofs}.
\end{proof}

\begin{theorem}[Best-Iterate Convergence]\label{th}
    Under \cref{as:1,,as:2}, let $\ell$ and $\sigma \coloneqq \sigma_1$ be the constants defined in Proposition \ref{prop:1}, $\alpha$ the entropy temperature, and $\Delta = \max_y f(x^{0}, y) - \max_y f(x^*, y)$ the objective distance between initial and optimal policies. For learning rates $\eta_x = \Theta(\alpha^2/\ell^3)$ and $\eta_y = \Theta(1/\ell)$, and a batch size $M = \Theta(\max\{1, \sigma^2\ell/\alpha\ep^2\})$, Algorithm \ref{alg:theory} finds an $\ep$-stationary policy $\pi_{x^*}$, such that $\norm{ \nabla_x \max_y f(x^*,y) } < \ep$, in a number of iterations given by
    $$ O\left(\frac{\Delta \ell^3}{\alpha^2\ep^2} + \frac{2 \ell^3}{\alpha\ep^4}\right) \,. $$
\end{theorem}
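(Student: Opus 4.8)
The plan is to read Theorem \ref{th} as a direct instantiation of the two-timescale stochastic gradient descent-ascent guarantee of \citet{lin2024twotimescalegradientdescentascent}, since Algorithm \ref{alg:theory} is precisely their update rule applied to the objective $f(x,y)$ of \cref{eq:minmax-entropy}. The first task is therefore to check, one by one, the structural hypotheses their theorem requires. Proposition \ref{prop:1} supplies three of them outright: unbiased gradient estimators with $\sigma^2$-bounded variance, $\alpha$-strong concavity of $f$ in $y$, and $\ell$-smoothness of $f$. The remaining hypothesis is that the maximising block have a convex compact feasible set, which holds because $\cY = \Delta_\xi(\Lambda)$ is a closed bounded convex subset of the simplex (Assumption \ref{as:2}) while the $x$-block is unconstrained -- exactly the regime their result covers.

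Before invoking the theorem I would reduce the min-max problem to minimisation of the envelope $\Phi(x) \coloneqq \max_{y \in \cY} f(x,y)$, since the stationarity target $\norm{\nabla_x \max_y f(x^*,y)} < \ep$ is phrased through $\Phi$. Because $f$ is $\alpha$-strongly concave in $y$ over the compact convex set $\cY$, the inner maximiser $y^*(x) = \argmax_y f(x,y)$ is unique and $(\ell/\alpha)$-Lipschitz in $x$; Danskin's theorem then gives $\nabla\Phi(x) = \nabla_x f(x, y^*(x))$ and, combined with $\ell$-smoothness, shows $\Phi$ is $L_\Phi$-smooth with $L_\Phi = \Theta(\ell^2/\alpha)$. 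This $L_\Phi$, and the induced condition number $\kappa = \ell/\alpha$, are what drive the rate.

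With the hypotheses verified, the core argument is the descent-plus-tracking analysis underlying their proof, which I would record for completeness. An $L_\Phi$-smoothness descent step on $\Phi$ along the $x$-update yields a $-\tfrac{\eta_x}{2}\norm{\nabla\Phi(x^t)}^2$ term plus errors controlled by the tracking gap $\norm{y^t - y^*(x^t)}^2$ and the variance $\sigma^2/M$; a second recursion shows the projected ascent step on the $\alpha$-strongly-concave $y$-objective contracts this gap at rate $1 - \Theta(\eta_y\alpha)$. The separation $\eta_y \gg \eta_x$ (here $\eta_y = \Theta(1/\ell)$, $\eta_x = \Theta(\alpha^2/\ell^3)$, so $\eta_x/\eta_y = \Theta(1/\kappa^2)$) keeps the tracking gap small against the slow $x$-drift; telescoping both recursions and dividing by $T\eta_x$ bounds $\tfrac{1}{T}\sum_t \norm{\nabla\Phi(x^t)}^2$ by a deterministic term of order $\Delta/(T\eta_x)$ and a noise term of order $L_\Phi\sigma^2/M$. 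Choosing $M = \Theta(\max\{1, \sigma^2\ell/\alpha\ep^2\})$ forces the noise term below $\ep^2$, and the deterministic term drops below $\ep^2$ once $T = \Theta(\Delta/(\eta_x\ep^2)) = \Theta(\Delta\ell^3/\alpha^2\ep^2)$, the first summand of the claimed bound; the second summand $\Theta(\ell^3/\alpha\ep^4)$ is the extra budget needed to drive down the tracking error in the stochastic regime. Best-iterate selection is then immediate, since an average bound of $\ep^2$ implies $\min_t \norm{\nabla\Phi(x^t)}^2 < \ep^2$.

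The final step is bookkeeping: substitute the explicit $\sigma$ and $\ell$ from Proposition \ref{prop:1} into the generic rate to recover the stated learning rates, batch size, and iteration count. I expect the main obstacle to lie not in the high-level reduction but in reconciling the precise form of the cited theorem with our statement -- in particular, confirming that their guarantee is for the best (rather than averaged or randomly drawn) iterate, that their stationarity measure coincides with $\norm{\nabla\Phi}$ after accounting for the projection on the $y$-block, and that the constants hidden in their $\Theta(\cdot)$ match those we derived. Pinning down the batch-size dependence so that $\sigma^2/M$ rather than $\sigma^2$ appears, and the two regimes $\ep^{-2}$ and $\ep^{-4}$ separate cleanly, is the calculation most likely to require care.
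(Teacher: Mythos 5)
Your proposal is correct and takes essentially the same route as the paper: the paper's proof is precisely the instantiation you describe, applying Proposition \ref{prop:1} to supply bounded variance, $\alpha$-strong concavity, and $\ell$-smoothness, then invoking \citet[Theorem 4.5]{lin2024twotimescalegradientdescentascent} with the compact convex set $\cY = \Delta_\xi(\Lambda)$ (diameter $D \leq \sqrt{2}$) and condition number $\kappa = \ell/\alpha$. The descent-plus-tracking recursion you reconstruct is internal to the cited theorem, which the paper treats as a black box, so your additional detail is consistent with but not required by the paper's argument.
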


\begin{proof}
    Apply Proposition \ref{prop:1} and \citet[Theorem 4.5]{lin2024twotimescalegradientdescentascent}, with $D \leq \sqrt{2}$ being the diameter of the $\xi$-truncated probability simplex $\cY$ and $\kappa = \ell/\alpha$ the condition number.
\end{proof}

We remark that while we are only concerned with finding a stationary policy $\pi_{x^*}$ in the UED setting, the corresponding optimal distribution $y^* = \argmax_{y \in \cY} f(x^*, y)$ can efficiently be computed via projected gradient ascent due to the strong concavity of $f$ in $y$. The resulting point $(x^*, y^*)$ meets the conditions of Equation \eqref{eq:fo_nash}, and is therefore an $\epsilon$-approximate first-order Nash Equilibrium.
\section{Practical Considerations}\label{impl}
To detail the practical extension of our algorithm, we first introduce the \textit{dynamic} level buffer, and then explain our proposed generalisation of the learnability score function from \citet{rutherford2024regretsinvestigatingimprovingregret}. Additional discussion of the differences between our practical and theoretical methods are in Appendix \ref{sec:comparison}, and the full practical algorithm is given in Algorithm \ref{alg:practice}.
\subsection{Searching the level space}\label{sec:dynamic}

Given that the environment has a large enough level space (i.e. $|\mathcal{L}| \gg |\Lambda|$), it has been demonstrated that intermittently sampling new levels and exchanging them for low-scoring levels in the level buffer is often necessary for good performance \citep{jiang2021prioritizedlevelreplay}. Inherently, if the level space contains a high proportion of irrelevant (low-scoring) levels, the initially-sampled $\Lambda$ would lead to a poor training process if it were kept static. Considering this intuition, alongside the empirical results of \citet{jiang2021prioritizedlevelreplay}, we consider the dynamic case in practice.

To implement such a dynamic buffer, we compute the scores of newly sampled levels at every training iteration, and update the buffer with the top $|\Lambda|$ scoring levels \textit{prior} to constructing the adversary's gradient. 

\subsection{Heuristics and General-Sum UED} \paragraph{General-Sum UED} In practice, we use the same gradient estimators for $x$ and $y$ regardless of score function, and we find this method with general-sum score functions can lead to performance gains. However, with score functions that are not zero-sum with the policy's negative return, our method lacks convergence guarantees. We note that the baselines that we test our method against (i.e., PLR \citep{jiang2021prioritizedlevelreplay}, DR \citep{tobin2017domainrandomizationtransferringdeep}, and SFL \citep{rutherford2024regretsinvestigatingimprovingregret}) also lack convergence guarantees. 

\paragraph{Generalised Learnability} We extend the learnability score function of \citealp{rutherford2024regretsinvestigatingimprovingregret}, to general deterministic domains. As in the binary-outcome case, we aim to prioritise levels of intermediate difficulty for the current policy. We start with the standard deviation of the returns for a given level. However, unlike Equation~\ref{eq:learn}, we cannot entirely rely on a variance metric as we empirically find that in several domains, levels where agents do very poorly have a high return variance. In order to bias scoring against levels that are not of intermediate difficulty, we scale the standard error values with a Gaussian over the mean return of the level buffer $\Lambda$. This reduces the score for levels of significant distance from the mean reward. While this approach could bias scores towards levels with a high range of reward outcomes, empirically we do not find this to be an issue.

Given a set of $M$ trajectories $\{\tau_i\}_{i=1}^M$ on level $\lambda \in \Lambda$, we compute the level-wise empirical mean $\mu_\lambda$ = $\frac{1}{M} \sum_{i=1}^M R(\tau_i, \lambda)$, overall mean $\mu = \frac{1}{N}\sum_{\lambda\in\Lambda}\mu_\lambda$, level-wise empirical variance $\sigma^2_\lambda = (\frac{1}{M} \sum_{i=1}^M R(\tau_i, \lambda)^2) - \mu_\lambda^2$, and overall variance $\sigma^2 =(\frac{1}{N}\sum_{\lambda} \mu_\lambda^2) - (\frac{1}{N}\sum_{\lambda} \mu_\lambda)^2 $. Using the Gaussian probability density function $\mathcal{N}(\cdot | \mu, \sigma^2)$, we get the generalised learnability score:
\begin{equation}
    s(\pi_x, \lambda) = \sigma_\lambda\cdot \mathcal{N}(\mu_\lambda | \mu,  \sigma^2) \,.
\end{equation}
In Appendix~\ref{sec:gen_learn_supp} we repeat the score function analysis of \citet{rutherford2024regretsinvestigatingimprovingregret}, demonstrating the score function's effectiveness on Minigrid. Generalised learnability thus allows for the use of generalised SFL (``Gen-SFL'' in Figure \ref{plot:craftax}), which obtains superior performance on Craftax.

\section{Experiments}\label{sec:exps}
Alongside our theoretical considerations, our method outperforms contemporary work on UED benchmarks after being extended to a practical algorithm. In this section we detail the choice of benchmarks and provide an experimental evaluation of our method's performance.

\subsection{Experimental Setup}
We test our method on benchmarks from \citet{rutherford2024regretsinvestigatingimprovingregret} and report results on Minigrid \citep{cb2023minigrid}, using the implementation from \citet{coward2024jaxuedsimpleuseableued}, and XLand-Minigrid \citep{nikulin2023xlandminigrid}. We omit JaxNav, as the single-agent setting's results are highly saturated and the multi-agent setting introduces additional optimisation challenges. We refer the reader to \citet{rutherford2024regretsinvestigatingimprovingregret} for more details on the individual environments, noting only that Minigrid is the only environment with a regret oracle. Additionally, we show that our method can obtain competitive performance on a more complex benchmark, \textit{Craftax} \citep{matthews2024craftaxlightningfastbenchmarkopenended}.

In all provided plots, we show our contributions in \textbf{bold} font, and suffix ``NCC'' with the particular score function used (see \citet{rutherford2024regretsinvestigatingimprovingregret} for a further discussion of such score functions). ``Reg'', ``Learn'', and ``PVL'' correspond to regret, learnability, and positive value loss respectively. PVL is an approximation of regret when the latter is unknown, as in XLand-Minigrid and Craftax. For Minigrid, learning curves overlap significantly and we use a bar plot for visual clarity, whereas for XLand-Minigrid and Craftax we plot evaluation results over the course of training to additionally compare sample efficiency.

Experiments were written in JAX \citep{jax2018github} and we discuss experimental details in Appendix \ref{time}. Results are averaged across 10 seeds, and standard error from the mean is displayed in the plots. All experiments use PPO \citep{schulman2017proximalpolicyoptimizationalgorithms} as the RL algorithm of choice.

We largely use the hyperparameters from \citet{rutherford2024regretsinvestigatingimprovingregret} and \citet{matthews2024craftaxlightningfastbenchmarkopenended}, detailed in Appendix \ref{sec:hypers}. Moreover, we demonstrate the diversity of levels proposed by our method throughout training in Appendix \ref{sec:difficulty}.

\subsection{Results}
\begin{figure}[h]
    \centering
        \includegraphics[width=0.7\linewidth]{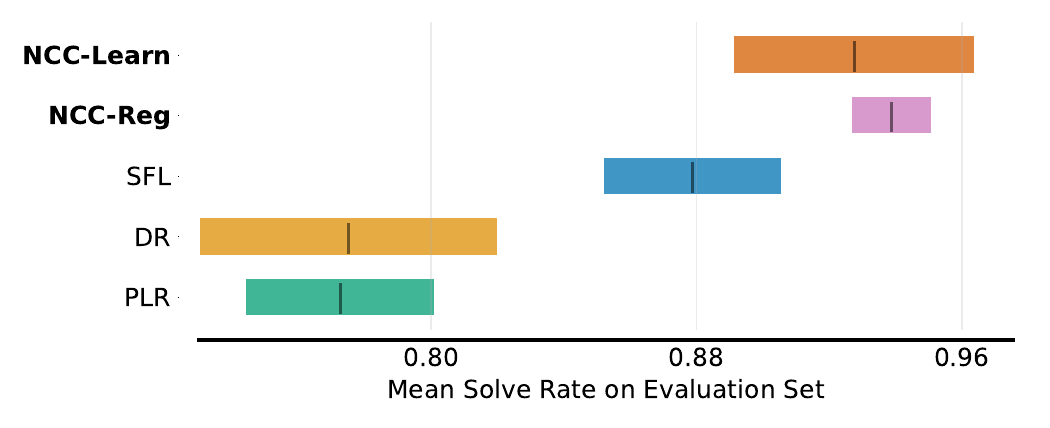} 
    \caption{Mean solve rates with standard error bars on Minigrid, a common UED testbed.\vspace{-0.25cm}}
    \label{plot:minigrid}
\end{figure}

\paragraph{Minigrid}
We observe improved performance on the Minigrid holdout set with 60 walls. For readability, we use a bar plot using recommendations from \citep{agarwal2021deep}, but with mean and standard error for consistency. Additionally, we use Minigrid to highlight our method's efficacy given a regret oracle, and thus do not test NCC-PVL.\footnote{We did use PLR-MaxMC instead of PLR-Reg however, because we did not see a improvement in empirical performance.} Despite the wisdom from \citet{rutherford2024regretsinvestigatingimprovingregret} that learnability improves learning, we hypothesise that due to the zero-sum nature of regret it may still lead to more robust policies in the end.

\begin{figure}[htbp]
    \centering
    \begin{subfigure}[b]{0.49\textwidth}
        \centering
        \includegraphics[width=\linewidth]{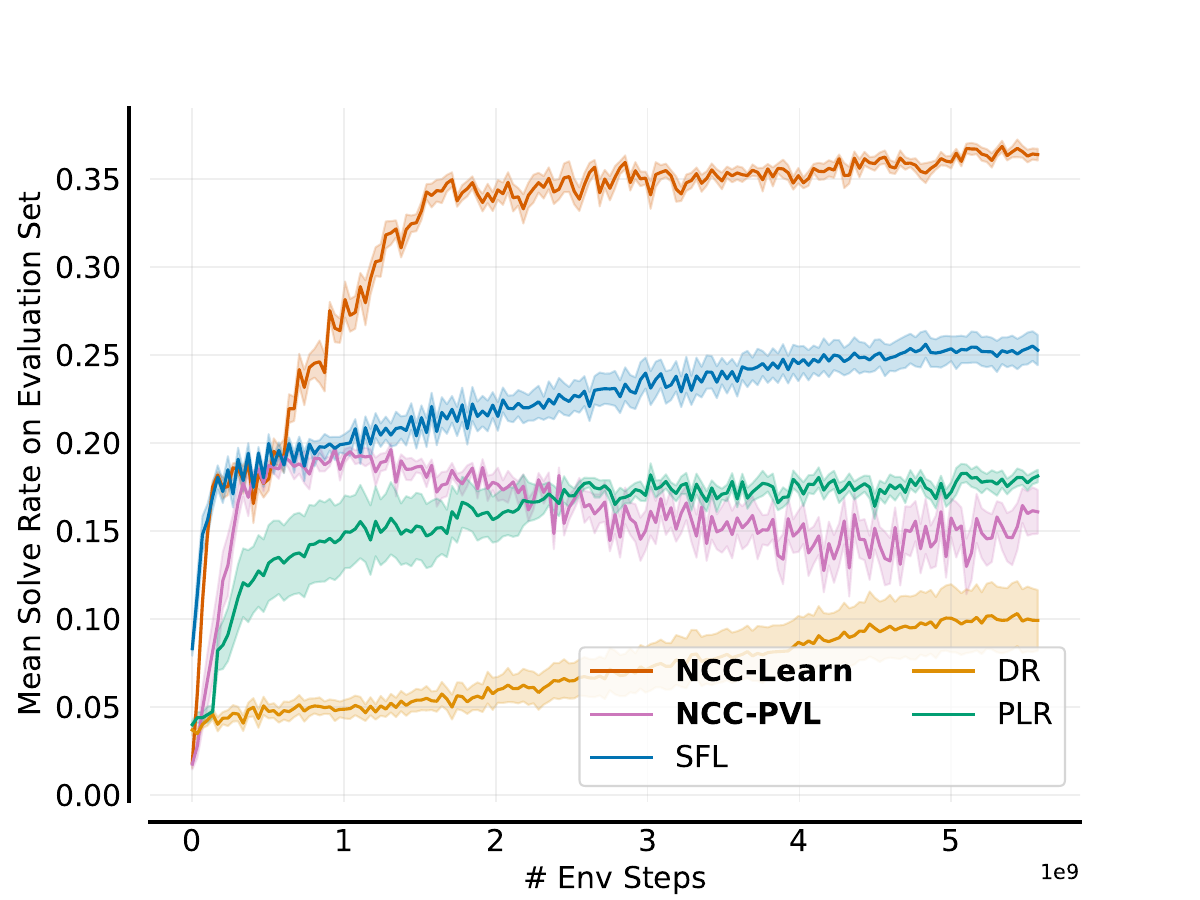}
        \caption{XLand-Minigrid}
    \end{subfigure}
    \hfill
    \begin{subfigure}[b]{0.49\textwidth}
        \centering
        \includegraphics[width=\linewidth]{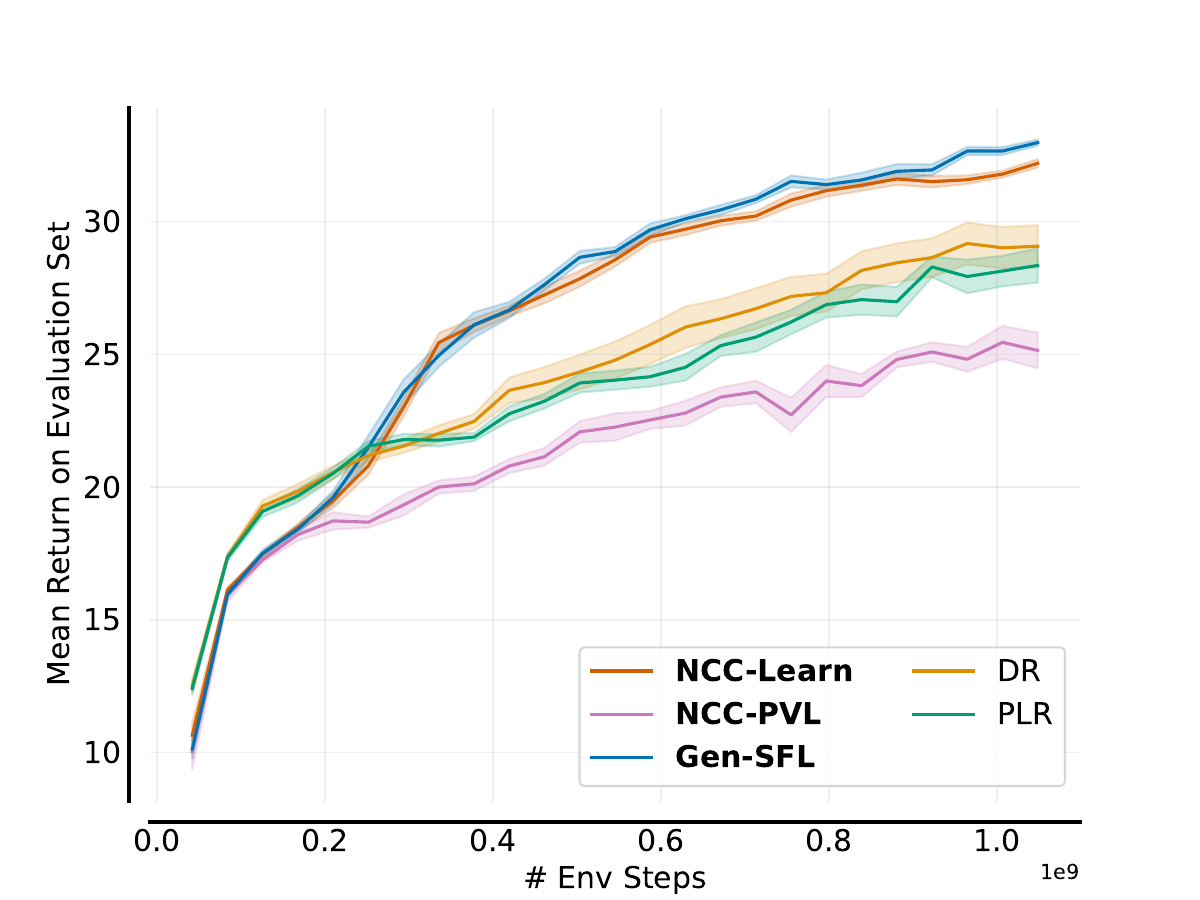} 
        \caption{Craftax}
    \end{subfigure}
    \caption{Performance on more difficult benchmarks.}\label{plot:craftax}
\end{figure}

\paragraph{XLand-Minigrid}
Our most significant improvement from prior work is in XLand-Minigrid. We note that out of the given testbeds, this environment has results that are less saturated, and thus leaves more room for improvement. NCC obtains a considerably improved solve rate compared to prior works, although we remark that this is only the case when using learnability as the score function.

\paragraph{Craftax} We use our new generalisation of learnability to outperform the highest-performing UED baseline from \citet[PLR-MaxMC]{matthews2024craftaxlightningfastbenchmarkopenended}. We remark that we find performance is stronger in Craftax with a \textit{static} buffer, and we highlight this to mention that in environments with a higher density of ``good'' levels, it may not be necessary to use a dynamic buffer.\footnote{In Craftax, the DR distribution of levels is the same as the evaluation distribution.} 
We attribute similar performance across generalised SFL and NCC with learnability to the algorithms' shared emphasis on levels with high learnability. See Appendix \ref{sec:craftax_dets} for additional implementation details for Craftax experiments.

\begin{figure}[H]
    \centering
    \includegraphics[width=0.5\linewidth]{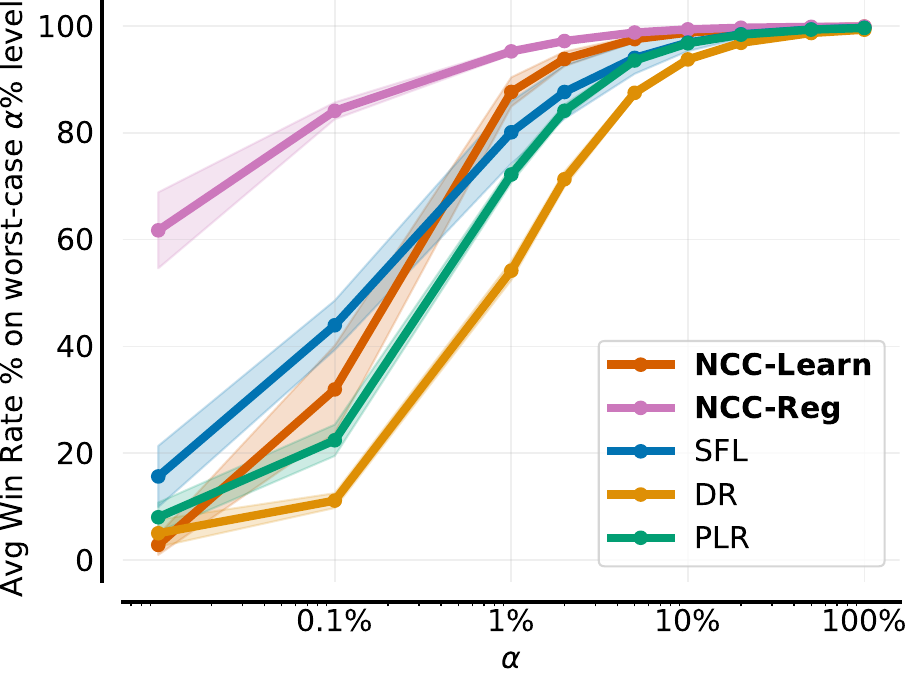}
    \caption{$\alpha$-CVaR evaluation performance on Minigrid.}
    \label{fig:grid_cvar}
\end{figure}
\paragraph{Robustness Evaluation} We also perform the $\alpha$-CVaR evaluation protocol from \citet{rutherford2024regretsinvestigatingimprovingregret}, which evaluates policies from 10 seeds per method on the $\alpha\%$ worst-case levels which are still solvable. We find that in XLand-Minigrid and Craftax, the robustness evaluations roughly match the experimental outcomes. However, in Minigrid (where regret is tractable) we find that NCC with regret strongly outperforms any other method, indicating NCC significantly impacts the robustness of the policy. We display our Minigrid result in Figure \ref{fig:grid_cvar}, and defer the other plots to Appendix \ref{sec:cvar}.
\section{Future Work}
Our work obtains best-iterate convergence guarantees, which is commonplace in nonconvex minimax optimisation \citep{lin2024twotimescalegradientdescentascent, kalogiannis2024learningequilibriaadversarialteam}. 
However, we leave the question of the more desirable last-iterate convergence property \citep{daskalakis2020lastiterateconvergencezerosumgames, constrained2021lei} to future work.

Moreover, while it may be possible to analyse the general-sum UED setting under the lens of bilevel optimisation \citep{hong2022twotimescaleframeworkbileveloptimization}, we would suggest that future work investigates practical and more scalable ways to produce convergent methods when the zero-sum condition is not met---as in the case when using learnability.
Finally, considering the emergence of analysis of more sophisticated reinforcement learning algorithms like PPO \citep{kuba2022mirror}, another promising direction for future work is analysing the practical variant of our algorithm.

\section{Conclusion}
In this paper, we examine the connection between UED and optimisation theory, thereby developing a new framework for optimisation within UED. This framework unlocks the area's first convergence guarantees, and suggests a new method that produces more robust policies. 
Following prior work, we use our theoretical analysis to develop a practical algorithm that obtains strong results on robustness evaluations. 
Ultimately, we believe that our work provides a gateway to the creation of practical robust RL methods with guarantees under reasonable assumptions, which will become increasingly important as UED gains prevalence.

\newpage


\appendix

\section{Proof of Proposition \ref{prop:1}}\label{app:proofs}

Before proving the proposition, we briefly discuss Assumption \ref{as:2}. In conjunction with Assumption \ref{as:1}, a simple and sufficient condition for it to hold is for $\pi_x$ to be parameterized by a neural network with bounded weights, composed of any number of fully-connected, convolutional, max-pooling, recurrent (vanilla / gated / LSTM) layers, dropout, batch normalization and smooth activation functions including Sigmoid, Softmax, Tanh, ArcTan, ELU, SELU, GELU, SoftPlus, Softsign \citep{lipschitz2018virmaux}. The condition that weights be bounded may seem restrictive, but (1) typically holds in practice because of weight regularisation, (2) can easily be ensured by clipping weights to an (arbitrarily large) box, which would only alter experimental results if gradients explode, and (3) can also be ensured by normalization across weight matrix rows, which in some cases may improve training stability at a minor cost in performance \citep{miller2019lstm}.

\begin{custom}{\ref{prop:1}}\label{proof:1}
    Under \cref{as:1,,as:2}, the estimator $\hat{H} = (\hat{F}, \hat{G})$ defined in \cref{eq:G,eq:H} has $\sigma^2_M$-bounded variance, where
    $$ \sigma^2_M = \frac{4 R_*^2}{M} \left( N + \frac{T^2 L^2}{\zeta^2} \right) \,. $$
    Moreover, the corresponding objective $f(x,y) = y^T \boldsymbol{s}(\pi_x, \Lambda) + \alpha \cH(y)$ defined in \cref{eq:minmax-entropy} is $\alpha$-strongly concave in $y$, Lipschitz, and $\ell$-smooth, where
    $$ \ell = \frac{TR_*}{\zeta} \left( TL^2 + K + \frac{L^2}{\zeta} + 2TL \right) + \frac{\alpha}{\xi} \,. $$
\end{custom}

\begin{proof}
We first derive the bounds for $M = 1$. Recall \cref{eq:G,eq:H} from the main text:
\begin{align*}
   \hat{F}(x, y) &= - \frac{1}{N} \sum_{i} \sum_{t=0}^T \nabla_x \log \pi_x(\tau_{i}^t, \lambda_i) \Psi^t(\tau_{i}, \lambda_i) \,, \\
    \hat{G}(x, y) &= \hat{\boldsymbol{s}}(\pi_x, \Lambda) + \alpha \nabla_y \mathcal{H}(y) \,, 
\end{align*}
where $\lambda_i \sim \Lambda(y)$ and $\tau_i \sim \pi_x(\lambda_i)$ are sampled independently for each level $i$, and $\hat{s}$ is the empirical score vector given by $\hat{s}(\pi_x, \lambda_i) = - R(\tau_{i}, \lambda_i)$ for $s = -J$ and $\hat{s}(\pi_x, \lambda_i) = \max_{\tau} R(\tau, \lambda_i) - R(\tau_{i}, \lambda_i)$ for $s = \text{Reg}$. Finally, denote $z = (x,y)$ for joint parameters.

\textbf{(1) Bounded variance.} First note that the variance of the entropy term is zero, hence
\begin{align*}
    \E\left[ \norm{\hat{H}(z) - H(z)}^2 \right]
    &= \E\left[ \norm{\hat{F}(z) - F(z)}^2 \right] + \E\left[ \norm{\hat{\boldsymbol{s}}(\pi_x, \Lambda) - \boldsymbol{s}(\pi_x, \Lambda)}^2 \right] \\
    &\leq \E\left[ \norm{\hat{F}(z)}^2 \right] + \E\left[ \norm{\hat{\boldsymbol{s}}(\pi_x, \Lambda)}^2 \right] ,.
\end{align*}
For the second term, we easily obtain
\begin{align*}
    \E\left[ \norm{\hat{\boldsymbol{s}}(\pi_x, \Lambda)}^2 \right] \leq \sum_i \E\left[ \hat{s}(\pi_x, \lambda_i)^2 \right] \leq 4 N R_*^2
\end{align*}
for both $s = -J$ and $s = \text{Reg}$. For the first term, we invoke Lipschitzness and $\zeta$-greediness of the policy $\pi$. For any trajectory $\tau$ and any level $\lambda$, we have $\abs{\sum_t \Psi^t(\tau, \lambda)} \leq 2T R_*$ for any choice of estimator $\Psi^t \in \{ R^t, Q_{\pi_x}^t, A_{\pi_x}^t \}$, which combined with
\begin{align*}
    \norm{\nabla_x \log \pi_x(\tau^t, \lambda)} &= \frac{\norm{\nabla_x \pi_x(\tau^t, \lambda)}}{\pi_x(\tau^t, \lambda)} \leq \frac{L}{\zeta} \,,
\end{align*}
implies that
\begin{align*}
    \E\left[ \norm{\hat{F}(z)}^2 \right] \leq \norm{\sum_t \nabla_x \log \pi_x(\tau^t, \lambda) \Psi^t(\tau, \lambda)}^2 &\leq \frac{4 T^2 R_*^2 L^2}{\zeta^2}
\end{align*}
and hence
\begin{align*}
    \E\left[ \norm{\hat{H}(z) - H(z)}^2 \right] &\leq 4 N R_*^2 + \frac{4 T^2 R_*^2 L^2}{\zeta^2} = \sigma^2_1
\end{align*}
as required. The variance for arbitrary $M$ follows immediately as $\sigma_M^2 = \sigma_1^2 / M$.

\textbf{(2) Strong concavity of $f$ in $y$.} Trivial, since $\nabla_y^2 f(x, y) = \text{diag}(-\alpha/y) \preceq -\alpha I$.

\textbf{(3) Lipschitzness of $f$.} First note that
\begin{align*}
    \norm{ \nabla_y \cH }^2 = \sum_i (1+\log y_i)^2 \leq N (1+\log \xi)^2 \,.
\end{align*}
We combine this with Jensen's inequality and part \textbf{(1)} of the proof above to obtain $L$-Lipschitzness:
\begin{align*}
    \norm{ \nabla f }^2 &= \norm{ \E\big[ \hat{H} \big] }^2 \leq \E\left[ \norm{ \hat{H} }^2 \right] \\
    &= \E\left[ \norm{ \hat{F} }^2 \right] + \E\left[ \norm{ \hat{\boldsymbol{s}}  + \alpha \nabla_y \cH}^2 \right] \\
    &= \E\left[ \norm{ \hat{F} }^2 \right] + \E\left[ \norm{ \hat{\boldsymbol{s}}}^2 \right] + \alpha^2 \norm{ \nabla_y \cH }^2 + 2\alpha \E\left[ \norm{ \hat{\boldsymbol{s}} } \right] \norm{\nabla_y \cH } \\
    &\leq \sigma_1^2 + \alpha^2 N (1+\log \xi)^2 + 4\alpha N R_*(1+\log \xi) \eqqcolon L \,.
\end{align*}

\textbf{(4) Smoothness of $f$.} The policy gradient Hessian is given by \citep{shen2019hessian}
$$
\nabla_x^2 J(\pi_x, \lambda) = \E_\tau \bigg[ \sum_t R^t(\tau, \lambda) \Big( \nabla_x \log \pi_x(\tau^t) \nabla_x \log p(\tau \mid \pi_x)^T + \nabla_x^2 \log \pi_x(\tau^t) \Big) \bigg] \,,
$$
where $p(\tau \mid \pi_x) = \rho(s_0) \prod_t P(s_{t+1} \mid s_t, a_t) \pi_x(\tau^t)$ for initial and transition distributions $\rho$ and $P$ (omitting $\lambda$ for convenience). For the first term, writing $P(\tau) = \prod_t P(s_{t+1} \mid s_t, a_t)$, we have
\begin{align*}
    \nabla_x p(\tau \mid \pi_x) = \rho(s_0) P(\tau) \sum_t \nabla_x \pi_x(\tau^t) \prod_{s \neq t} \pi_x(\tau^t)
\end{align*}
which implies
\begin{align*}
    \norm{\nabla_x p(\tau \mid \pi_x)} \leq TL \,,
\end{align*}
hence the first term is bounded for each $t$:
\begin{align*}
    \norm{\nabla_x \log \pi_x(\tau^t) \nabla_x \log p(\tau \mid \pi_x)^T} \leq \frac{TL^2}{\zeta} \,.
\end{align*}
For the second term, we use the $K$-smoothness of $\pi$ to obtain:
\begin{align*}
    \norm{\nabla_x^2 \log \pi_x(\tau^t)} &= \norm{\frac{\nabla_x^2 \pi_x(\tau^t)}{\pi_x(\tau^t)} - \frac{\nabla_x \pi_x(\tau^t) \nabla_x \pi_x(\tau^t)^T}{\pi_x(\tau^t)^2}} \leq \frac{K}{\zeta} + \frac{L^2}{\zeta^2} \,.
\end{align*}
Putting everything together, recalling that $\abs{R^t(\tau, \lambda)} \leq R_*$ for all $t$, we obtain
\begin{align*}
    \norm{\nabla_x^2 f(z)} &= \norm{ y^T \nabla_x^2 J(\pi_x, \Lambda) } \leq \max_\lambda \norm{ \nabla_x^2 J(\pi_x, \lambda) } \leq \frac{T R_*}{\zeta} \left( TL^2 + K + \frac{L^2}{\zeta} \right) \,.
\end{align*}
Now notice that $\norm{\nabla_{y}^2 f(z)} = \norm{\text{diag}(-\alpha/y)} \leq \alpha/\xi$ since $y_i \leq \xi$ for all $i$. Moreover,
$$ \norm{\nabla_{xy}^2 f(z)} = \norm{\nabla_x J(\pi_x, \Lambda)} \leq \frac{2TR_*L}{\zeta} $$
by the same argument as part \textbf{(1)} of the proof, so we conclude
$$ \norm{\nabla^2 f(z)} \leq \frac{TR_*}{\zeta} \left( TL^2 + K + \frac{L^2}{\zeta} + 2TL \right) + \frac{\alpha}{\xi} = \ell $$
as required.
\end{proof}

\subsubsection*{Acknowledgments}
The authors would like to thank Jingming Yan, Ioannis Panageas, and JB Lanier for their helpful feedback on the paper.  MJ and AG are funded by the EPSRC Centre
for Doctoral Training in Autonomous Intelligent Machines and Systems. MJ is also funded by Amazon Web Services. MB is funded by the Rhodes Trust. AR is partially funded by the EPSRC Programme Grant ``From Sensing to Collaboration'' (EP/V000748/1).
\bibliography{main}
\bibliographystyle{rlj}

\beginSupplementaryMaterials

\section{Additional Experimental Details}\label{sec:details}

\subsection{Practical Algorithm}
In practice, we often (although not always) find it helpful to use a dynamic buffer, as per Section \ref{sec:dynamic}. Moreover, we define the subroutine \textsc{train\_rl} to refer to any form of training from the reinforcement learning literature, however in practice we use mini-batch PPO \citep{schulman2017proximalpolicyoptimizationalgorithms}. Moreover, we use TiAda-Adam \citep{li2022tiadatimescaleadaptivealgorithm} as an adaptive optimiser for our optimisation setting, but for all other experiments we use Adam \citep{kingma2017adammethodstochasticoptimization}. We write details for the dynamic buffer in \textcolor{brightmaroon}{maroon}.

\begin{algorithm}[h]
    \caption{Practical Nonconvex-concave Optimisation for UED \textcolor{brightmaroon}{(Dynamic Buffer)}}

    \begin{algorithmic}\label{alg:practice}
        \REQUIRE{Initial policy $x^{0}$, distribution $y^0 = \frac{1}{|\Lambda|}\mathbf{1}$, stepsizes $\eta_x, \eta_y$}, \textcolor{brightmaroon}{initial} level set $\Lambda^{0}$.
        \FOR{$t = 0, 1, \ldots$} 
            \STATE Sample batch of training levels $\boldsymbol{\lambda} \sim \Lambda^{t}(y^{t})$
            \STATE Construct score vector $\boldsymbol{s}(\pi_x, \Lambda)$
            \textcolor{brightmaroon}{\STATE Sample new levels $\Lambda' \sim \mathcal{L}$ 
            \STATE Construct alternate score vector $\mathbf{s}' = s(\pi_x, \Lambda')$
            \STATE $\Lambda^{t+1} = $ top $|\Lambda|$ elements from $\Lambda^{t} \cap \Lambda'$
            \STATE Construct merged score vector $\Tilde{\mathbf{s}} = \boldsymbol{s}(\pi_x, \Lambda^{t})$}
            \STATE $x^{t+1} = $ \textsc{train\_rl}($x^{t}$, $\boldsymbol{\lambda}$, $\eta_x$)
            \STATE $y^{t+1} = \mathcal{P}_{\mathcal{Y}} \bigg(y^{t} + \eta_y \cdot \hat{G}(x^{t}, y^{t}; \mathbf{s},
            \textcolor{brightmaroon}{\Tilde{\mathbf{s}}}) \bigg)$ with $\hat{G}$ defined in Equation \eqref{eq:H}
           
        \ENDFOR\\
        \STATE \textbf{return} Best-iterate policy parameters $x^*$
    \end{algorithmic}
\end{algorithm}

\subsection{Additional Craftax Details}\label{sec:craftax_dets}
We maintain the training regime of \citet{matthews2024craftaxlightningfastbenchmarkopenended} by using ``inner'' and ``outer'' rollouts, where we update after multiple parallelised sub-sequences within an episode. Moreover, due to such a level space where the levels and test set are so similar, we found that it more effective to \textit{anneal} our entropy regularisation coefficient $\alpha$ by the rule $\alpha^{t} = \frac{\alpha}{\sqrt[3]{t+1}}$, thus resulting in a more diverse set of training levels at the start of training. 

\subsection{Comparison Between Theory and Practice}\label{sec:comparison}
We give the following side by side comparison between our practical and theoretical method:
\begin{table}[H]
    \centering
    \caption{Qualitative comparison between theoretical NCC and practical NCC.}
    \begin{tabular}{@{}lll@{}}
        \toprule
        {} & Theory & Practice \\
        \midrule
        $x$ Gradient Estimator & Equation \ref{eq:G} (REINFORCE) & Any \\
        \# minibatches (epochs) & 1 (1) & Any (Any) \\
        Dynamic Buffer & Not Allowed & Allowed \\
        Optimiser & SGD & Any \\
        Score Function & Only zero-sum (i.e. Regret and $-J$) & Any  \\ 
        Activation Function & Smooth & Any \\
        \bottomrule
    \end{tabular}
    \label{tab:comparison}
\end{table}

\begin{figure}[htbp]
    \centering
    \begin{subfigure}[b]{0.475\textwidth}
        \centering
        \includegraphics[width=\linewidth]{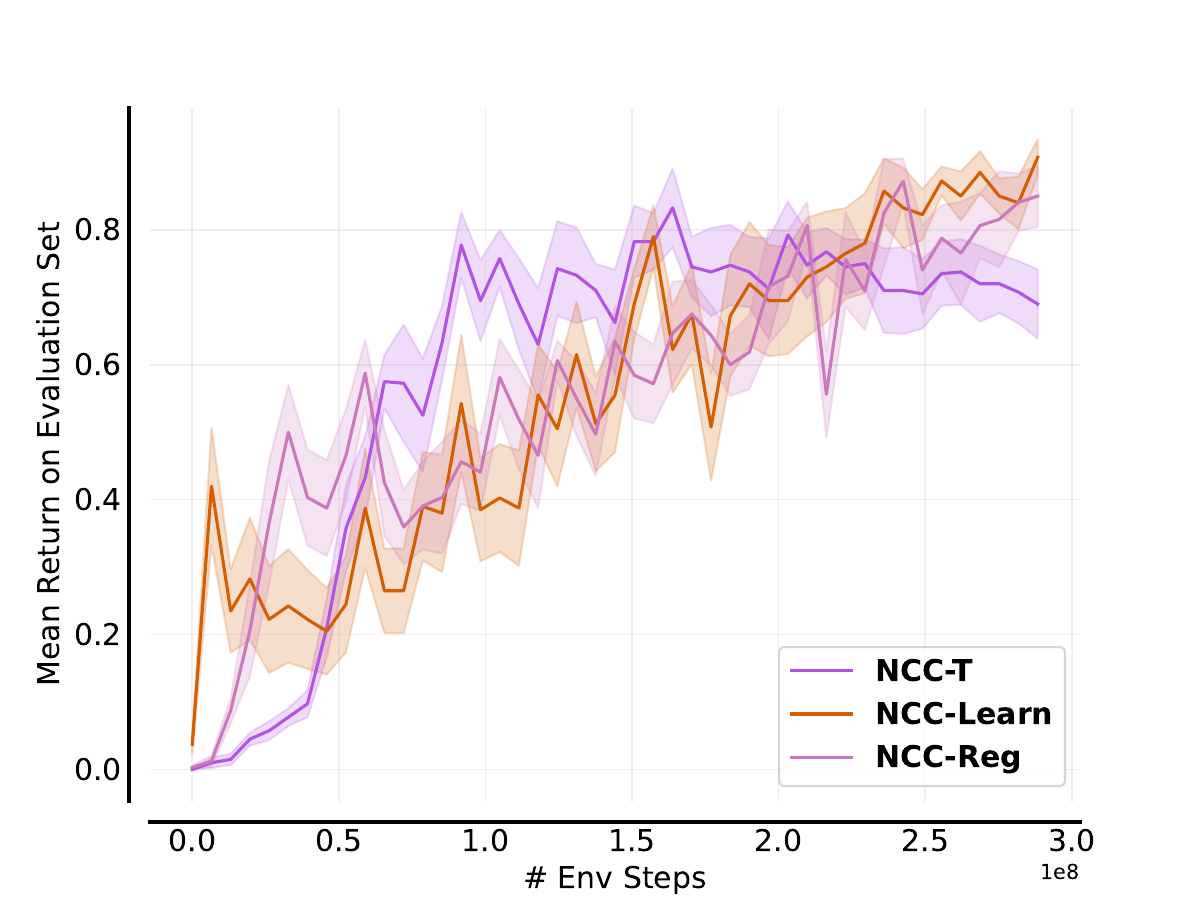}  
        \caption{Solve Rates}
    \end{subfigure}
    \hfill
    \begin{subfigure}[b]{0.475\textwidth}
        \centering
        \includegraphics[width=0.875\linewidth]{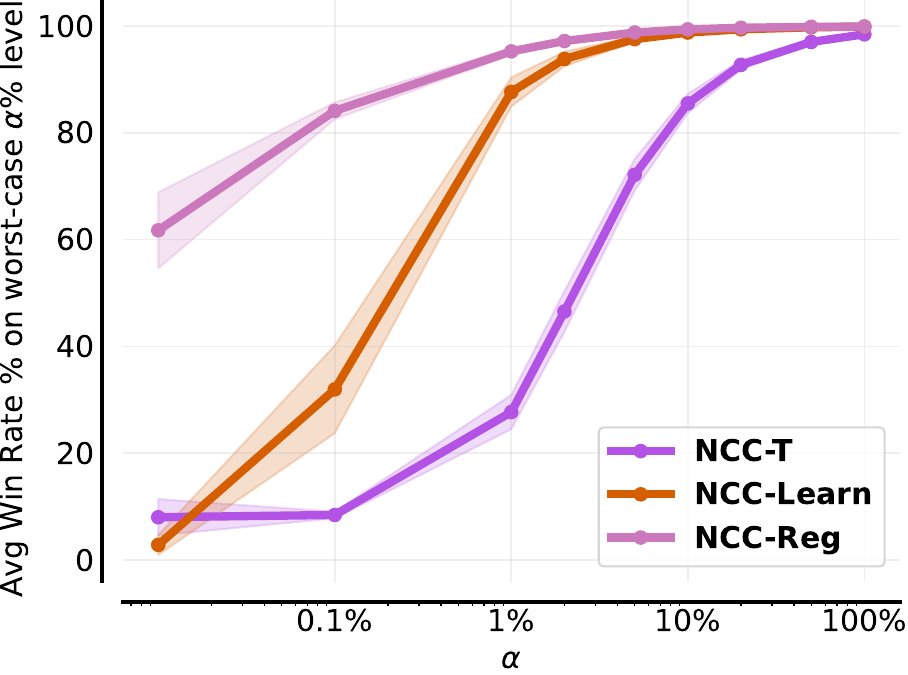}
        \caption{$\alpha$-CVar Evaluation}
    \end{subfigure}
    \caption{Empirical comparison between the different instantiations of NCC.}\label{fig:ncc_comparison}
\end{figure}

Moreover, in Figure \ref{fig:ncc_comparison} we experimentally compare the theoretical version of our method (``NCC-T'') with the other methods on Minigrid, which is our only testbed with a regret oracle. 

\subsection{Generalised Learnability Score Function}
\label{sec:gen_learn_supp}

In Figure~\ref{fig:gen_learn_ana} we repeat the analysis of UED score functions conducted by~\citet{rutherford2024regretsinvestigatingimprovingregret}. To give us a success rate metric, we conduct this analysis in Minigrid using a policy trained for 1100 update steps with SFL ($1/4$ of a usual training run). We randomly sample 5000 levels and rollout the policy for 2000 timesteps on each. The trend illustrated by the quadratic demonstrates the generalised learnability score function's ability to identify levels of intermediate difficulty.

\begin{figure}[h]
    \centering
    \begin{tabular}{cc}
        \includegraphics[width=0.475\linewidth]{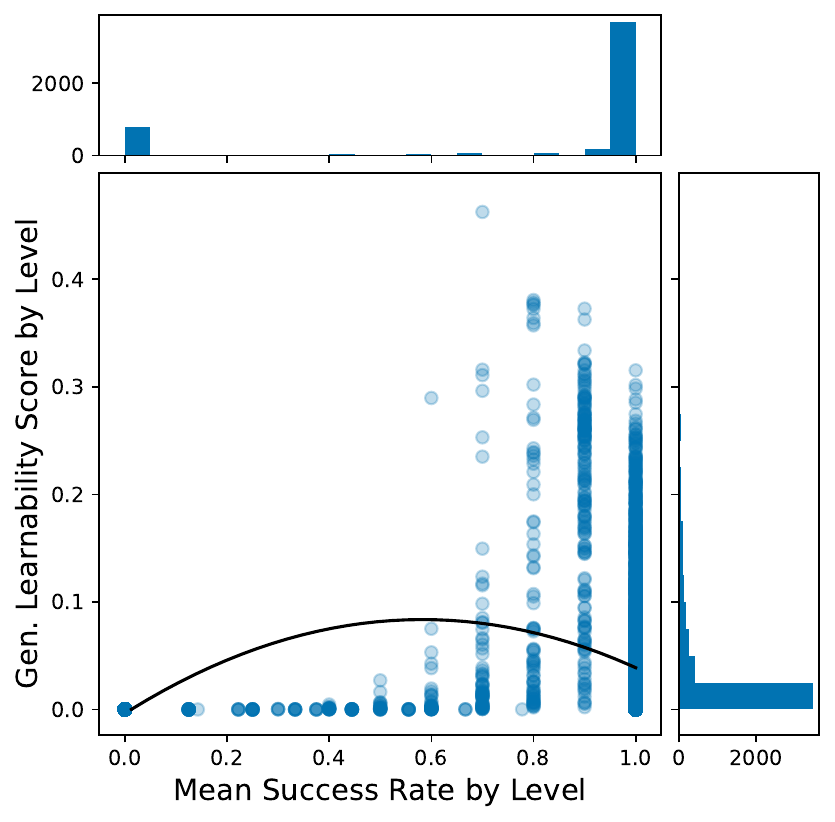}  &\includegraphics[width=0.475\linewidth]{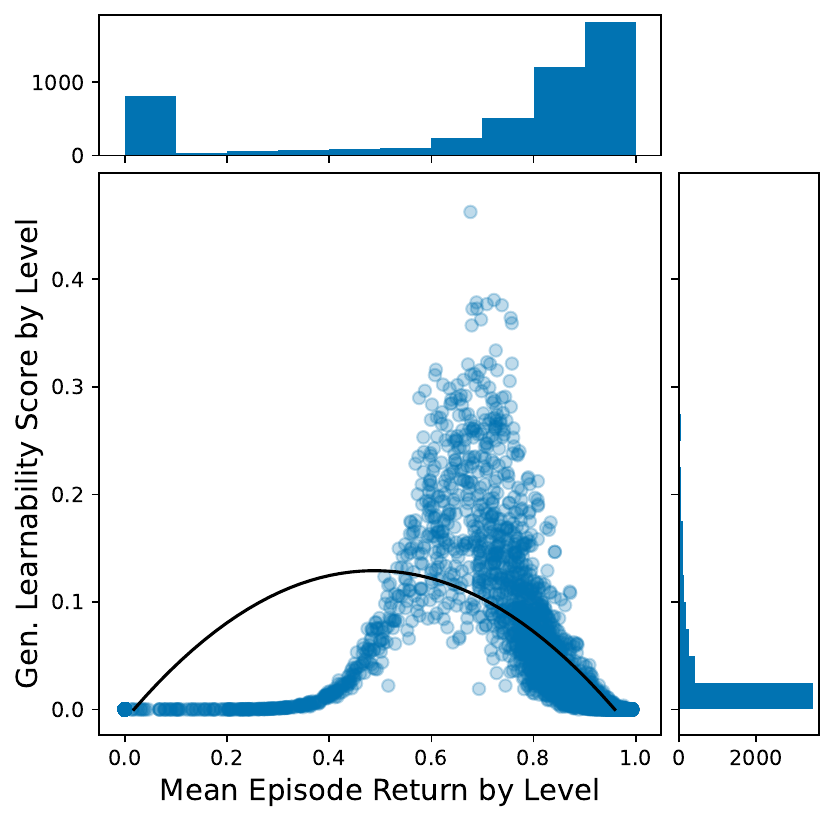} \\
    \end{tabular}
        
    \caption{Analysis of Generalised Learnability Score function on Minigrid. The black lines represent a quadratic fit to the scatter data.}
    \label{fig:gen_learn_ana}
\end{figure}

\subsection{Robustness Scores}\label{sec:cvar}
\begin{figure}[H]
    \centering
    \begin{subfigure}[b]{0.475\textwidth}
        \centering
        \includegraphics[width=\linewidth]{{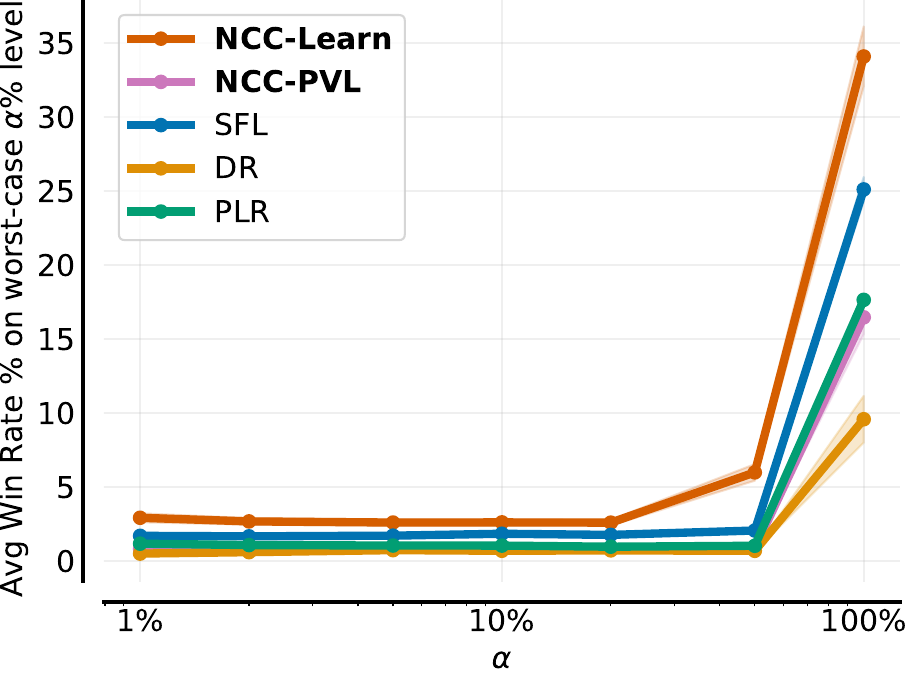}}
        \caption{XLand-Minigrid}
    \end{subfigure}
    \hfill
    \begin{subfigure}[b]{0.475\textwidth}
        \centering
        \includegraphics[width=\linewidth]{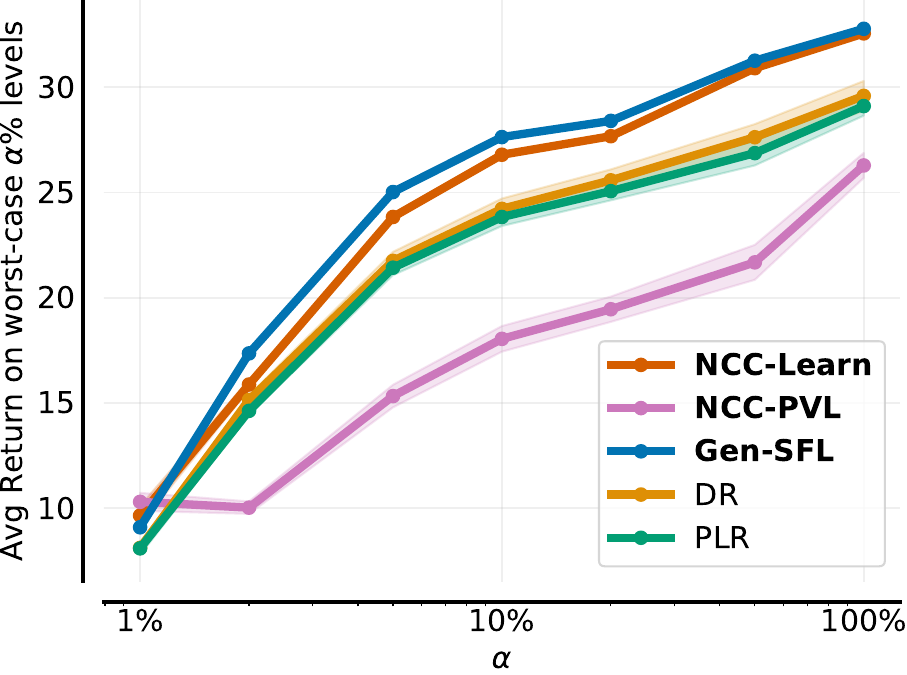} 
        \caption{Craftax}
    \end{subfigure}
    \caption{Omitted $\alpha$-CVaR Robustness Evaluation curves.}
\end{figure}

\subsection{Compute Time}\label{time}

Table \ref{tab:compute_time} reports the compute time for all experimental evaluations. Each Minigrid seed was run on 1 Nvidia A40 using a server that has 8 Nvidia A40's and two AMD EPYC 7513 32-Core Processor (64 cores in total). Meanwhile, for XLand and Craftax, each individual seed was run on 1 Nvidia L40s using a server that has 8 NVIDIA L40s’, two AMD EPYC 9554 processors (128 cores in total). However, NCC experiments for Minigrid were run on the L40s server.

\begin{table}[h]
    \centering
    \caption{Mean and standard deviation of time take for experimental evaluations. Each evaluation consisted of 10 independent seeds.}
    \begin{tabular}{@{}lrrr@{}}
        \toprule
         Method & Minigrid & XLand & Craftax \\
         \midrule
         NCC Learn & 1:02:41 (0:00:26) & 3:31:57 (0:01:06) & 5:35:06 (0:00:59) \\
         NCC Regret & 1:01:59 (0:00:11) & - & - \\
         NCC PVL & - & 2:52:31 (0:00:53) & 4:13:16 (0:00:39) \\
         SFL & 0:28:19 (0:00:03) & 9:16:31 (0:01:17) & 4:29:17 (0:00:31) \\
         PLR & 0:45:16 (0:00:10) & 2:47:53 (0:00:47) & 3:24:46 (0:00:24) \\
         DR & 0:43:28 (0:00:16) & 2:42:27 (0:00:43) & 3:28:06 (0:00:08) \\
         \bottomrule
    \end{tabular}
    \label{tab:compute_time}
\end{table}
\clearpage
\subsection{Hyperparameters}\label{sec:hypers}

\begin{table}[h]
    \centering
    \caption{NCC (Reg) Hyperparameters}
    \label{fig:hypers_ncc}
    \begin{tabular}{@{}lrrr@{}}
        \toprule
        Hyperparameter & Minigrid  & XLand & Craftax \\
        \midrule
        $\eta_x$ & $0.001$  & $0.0001$ & $0.0001$ \\
        $\eta_y$ & $0.1$ $(0.005)$ & $0.01$& $0.01$ \\
        $\alpha$ & $0.05$ $(0.032)$ & 0 & $0.05$ \\
        $|\Lambda|$ & 4000 & 4000 & 4000 \\
        $|\Lambda'|$ & 256 & 8192 & 0 \\    $|\boldsymbol{\lambda}|$ & 256 & 8192 & 1024 \\
        $\gamma$ & 0.995 & 0.99 & 0.995 \\
        GAE $\lambda$ &0.98 & 0.95 & 0.95 \\
        \verb|clip_eps| & 0.2 & 0.2 & 0.2 \\
        \verb|critic_coeff| & 0.5 & 0.5 & 0.5 \\
        \verb|entropy_coeff| & 0.001 & 0.01 & 0.01 \\
        \verb|num_epochs| & 1 & 1 & 4 \\
        \verb|max_grad_norm| & 0.25 & 0.5 & 1.0 \\
        \verb|num_minibatches| & 1 & 16 & 2 \\
        \verb|num_parallel_envs| & 256 & 8192 & 1024 \\
        \bottomrule
    \end{tabular}
\end{table}

\begin{table}[h]
    \centering
    \caption{{PLR (DR)} Hyperparameters}
    \label{fig:hypers_plr}
    \begin{tabular}{@{}lrrr@{}}
        \toprule
        Hyperparameter & Minigrid  & XLand & Craftax \\
        \midrule
        $\eta_x$ & $0.00025$  & $0.0001$ & $0.0002$ \\
        $|\Lambda|$ & 4000 & 4000 & 4000 \\
        $\gamma$ & 0.995 & 0.99 & 0.99 \\
        GAE $\lambda$ &0.98 & 0.95 & 0.9 \\
        \verb|clip_eps| & 0.2 & 0.2 & 0.2 \\
        \verb|critic_coeff| & 0.5 & 0.5 & 0.5 \\
        \verb|entropy_coeff| & 0 & 0.01 & 0.01 \\
        \verb|num_epochs| & 4 & 1 & 5 \\
        \verb|max_grad_norm| & 0.5 & 0.5 & 1.0 \\
        \verb|num_minibatches| & 4 & 16 & 2 \\
        \verb|num_parallel_envs| & 256 & 8192 & 1024 \\
        \verb|replay_prob| & 0.5 (0) & 0.95 (0) & 0.5 (0) \\
        \verb|staleness_coeff| & 0.3 & 0.3 & 0.3 \\
        \verb|temperature| & 1 & 1 & 1 \\
        \bottomrule
    \end{tabular}
\end{table}

\begin{table}[h]
    \centering
    \caption{SFL Hyperparameters}
    \label{fig:hypers_sfl}
    \begin{tabular}{@{}lrrr@{}}
        \toprule
        Hyperparameter & Minigrid  & XLand & Craftax \\
        \midrule
        $\eta_x$ & $0.00025$  & $0.001$ & $0.0001$ \\
        $|\Lambda|$ & 1000 & 8192 & 4000 \\
        $\gamma$ & 0.99 & 0.99 & 0.995 \\
        GAE $\lambda$ &0.95 & 0.95 & 0.95 \\
        \verb|clip_eps| & 0.04 & 0.2 & 0.2 \\
        \verb|critic_coeff| & 0.5 & 0.5 & 0.5 \\
        \verb|entropy_coeff| & 0 & 0.01 & 0.01 \\
        \verb|num_epochs| & 4 & 1 & 4 \\
        \verb|max_grad_norm| & 0.5 & 0.5 & 1.0 \\
        \verb|num_minibatches| & 4 & 16 & 2 \\
        \verb|num_parallel_envs| & 256 & 8192 & 1024 \\
        \verb|batch_size| & 4000 & 40000 & 4000\\
        \verb|num_batches| & 5 & 1 & 5 \\
        \bottomrule
    \end{tabular}
\end{table}

\clearpage
\section{Difficulty of Levels}\label{sec:difficulty}
To show how our method evolves over time, we compare minigrid levels at halfway and final timesteps in training. Firstly, we plot levels from DR in Figure \ref{fig:dr_grids}. Levels from DR are not well selected, as there are unsolvable levels, as well as \textit{trivial} levels at the end of training. Secondly, as is explainable by them both selecting for learnability, NCC with learnability (Figure \ref{fig:ncc_grids}) and SFL (Figure \ref{fig:sfl_grids}) both have what appear to be difficult (but not impossible) levels halfway and at the end of training, although we do note that NCC appears to weigh some levels with shorter optimal paths at the end of training in comparison to SFL (particularly the left and middle levels of NCC). This may be to retain diversity in the difficulty of the batch of sampled levels, to prevent overfitting to a certain class of problems. However, our analysis is a hypothesis, as our approach is learned, meaning it is more black-box (i.e. uninterpretable).
\begin{figure}[h]
    \centering
    
        \begin{tabular}{ccc}
            \includegraphics[width=0.30\linewidth]{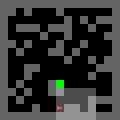} & \includegraphics[width=0.30\linewidth]{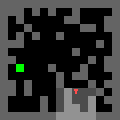}
             & \includegraphics[width=0.30\linewidth]{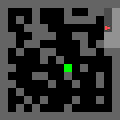} \\
             \includegraphics[width=0.30\linewidth]{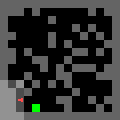} & \includegraphics[width=0.30\linewidth]{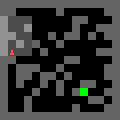}
             & \includegraphics[width=0.30\linewidth]{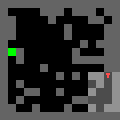} 
        \end{tabular}
        \caption{DR: Sampled levels at halfway through training (top row) and the end of training (bottom row)}
    \label{fig:dr_grids}
\end{figure}

\begin{figure}[h]
    \centering
    
        \begin{tabular}{ccc}
            \includegraphics[width=0.30\linewidth]{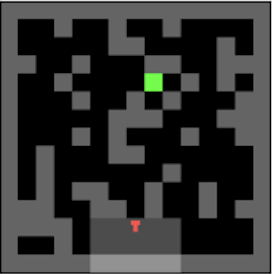} & \includegraphics[width=0.30\linewidth]{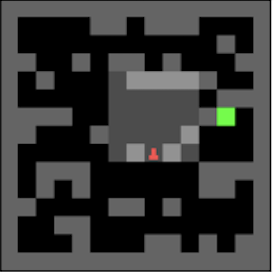}
             & \includegraphics[width=0.30\linewidth]{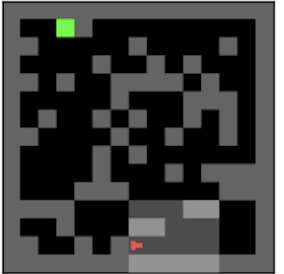} \\
             \includegraphics[width=0.30\linewidth]{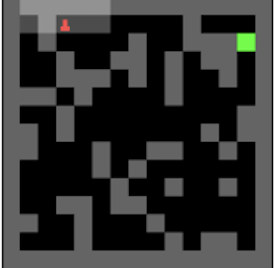} & \includegraphics[width=0.30\linewidth]{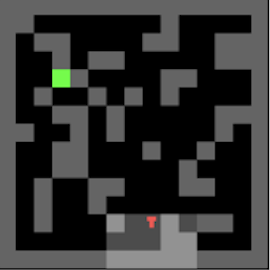}
             & \includegraphics[width=0.30\linewidth]{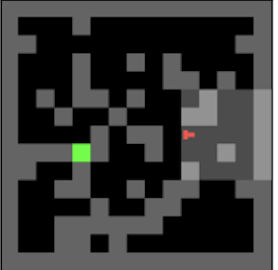} 
        \end{tabular}
        \caption{SFL: Highest learnability scoring levels at halfway through training (top row) and the end of training (bottom row)}
    \label{fig:sfl_grids}
\end{figure}

\begin{figure}[h]
    \centering
    
        \begin{tabular}{ccc}
            \includegraphics[width=0.30\linewidth]{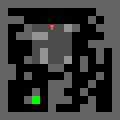} & \includegraphics[width=0.30\linewidth]{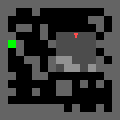}
             & \includegraphics[width=0.30\linewidth]{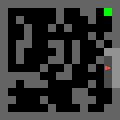} \\
             \includegraphics[width=0.30\linewidth]{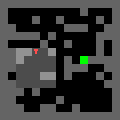} & \includegraphics[width=0.30\linewidth]{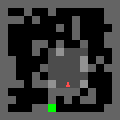}
             & \includegraphics[width=0.30\linewidth]{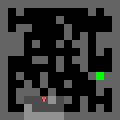} 
        \end{tabular}
        \caption{NCC-Learn: Highest weighted levels at halfway through training (top row) and the end of training (bottom row)}
    \label{fig:ncc_grids}
\end{figure}

\end{document}